\newcommand{\declarecolor}[2]{\definecolor{#1}{RGB}{#2}\expandafter\newcommand\csname #1\endcsname[1]{\textcolor{#1}{##1}}}
\newtheorem{definition}{Definition}
\newtheorem{theorem}{Theorem}
\newtheorem*{theorem*}{Theorem}
\newtheorem{proposition}{Proposition}
\newtheorem{remark}{Remark}
\newtheorem{property}{Property}
\newtheorem*{informaltheorem*}{Informal Theorem}
\DeclareMathOperator{\Unif}{Unif}
\newcommand{\reg}{\mathrm{Reg}}
\def\+#1{\mathcal{#1}}
\def\-#1{\mathbb{#1}}
\newcommand{\notshow}[1]{{}}
\newcommand{\AutoAdjust}[3]{{\mathchoice{ \left #1 #2  \right #3}{#1 #2 #3}{#1 #2 #3}{#1 #2 #3}}}
\newcommand{\Xcomment}[1]{{}}
\newcommand{\InBrackets}[1]{\AutoAdjust{[}{#1}{]}}
\renewcommand{\part}[2]{\frac{\partial #1}{\partial #2}}
\newcommand{\X}{\mathcal{X}}
\newcommand{\Y}{\mathcal{Y}}
\newcommand{\ualign}{{U-alignment}\xspace}
\title{Asymptotic Universal Alignment:\\ A New Alignment Framework via Test-Time Scaling}
\author{
Yang Cai\thanks{Authors are alphabetically ordered.}\\
Yale University\\
\texttt{yang.cai@yale.edu}\\
\and 
Weiqiang Zheng\footnotemark[1]\\
Yale University\\
\texttt{weiqiang.zheng@yale.edu}\\
}
\begin{document}
\maketitle
\begin{abstract}
Aligning large language models (LLMs) to serve users with heterogeneous and potentially conflicting preferences is a central challenge for personalized and trustworthy AI. We formalize an ideal notion of \emph{universal alignment} through \emph{test-time scaling}: for each prompt, the model produces $k\ge 1$ candidate responses and a user selects their preferred one. We introduce \emph{$(k,f(k))$-robust alignment}, which requires the $k$-output model to have win rate $f(k)$ against any other single-output model, and \emph{asymptotic universal alignment (U-alignment)}, which requires $f(k)\to 1$ as $k\to\infty$. Our main result characterizes the optimal convergence rate: there exists a family of \emph{single-output} policies whose $k$-sample product policies achieve U-alignment at rate $f(k)=\frac{k}{k+1}$, and no method can achieve a faster rate in general. 

We show that popular post-training methods, including Nash learning from human feedback (NLHF), can fundamentally underutilize the benefits of test-time scaling. Even though NLHF is optimal for $k=1$, sampling from the resulting (often deterministic) policy cannot guarantee win rates above $\tfrac{1}{2}$ except for an arbitrarily small slack. This stems from a lack of output diversity: existing alignment methods can collapse to a single majority-preferred response, making additional samples redundant. In contrast, our approach preserves output diversity and achieves the optimal test-time scaling rate. In particular, we propose a family of symmetric \emph{multi-player alignment games} and prove that any symmetric Nash equilibrium policy of the $(k+1)$-player alignment game achieves the optimal $(k,\frac{k}{k+1})$-robust alignment. Finally, we provide theoretical convergence guarantees for self-play learning dynamics in these games and extend the framework to opponents that also generate multiple responses.
\end{abstract}
\thispagestyle{empty}
\setcounter{page}{0}
\newpage

\section{Introduction}
Large language models (LLMs) have demonstrated remarkable versatility across domains such as text generation, code synthesis, information retrieval, and mathematical reasoning. Owing to their broad generalization capabilities, LLMs are increasingly integrated into users’ daily workflows to assist with information seeking, solution generation, and decision-making. However, users vary widely in their preferences---when given the same prompt, they may favor responses that differ in content, tone, style, values, or level of detail. Consequently, aligning LLMs to serve people with diverse values and perspectives~\citep{sorensen2024position} has been recognized as a central challenge in building more personalized and trustworthy AI systems.

We formalize an ideal notion of alignment across diverse preferences, in which an AI system is aligned with every possible user—a property we term \emph{universal alignment}. 
While one could, in principle, achieve universal alignment by training or fine-tuning a bespoke model for each individual user, such an approach is prohibitively expensive given the substantial computational and human resources required. 
In this paper, we ask to what extent universal alignment can be approached using a \emph{single model}. 
In particular,
\begin{equation*}
    \textit{Can we efficiently approach \textbf{universal alignment} using a single model and test-time scaling?}
\end{equation*}
At first glance, this may seem impossible: a single model must reconcile conflicting preferences, and satisfying one group may inevitably come at the expense of another. 
We show that, with appropriate post-training alignment and test-time scaling, such reconciliation is, in fact, possible. 
Moreover, we provide theoretical guarantees characterizing how the amount of test-time scaling required relates to the desired level of alignment.

\subsection{Our Model and Results}
In this section, we introduce a formal framework for studying universal alignment under test-time scaling 
and establish theoretical guarantees and fundamental limits within this framework. 
To facilitate the discussion, we assume that each user has an individual ranking over responses $y$ for every prompt $x$. 
This simplified model captures the essential ideas while keeping the presentation clear; 
however, our results extend to other preference structures such as Plackett--Luce and mixtures of Plackett--Luce 
(see \cref{sec:prelim} for details). 
Intuitively, we allow the model to generate $k \ge 1$ candidate responses for each prompt 
and let the user select their preferred one.\footnote{%
For clarity, we assume that the user selects their favorite response. 
In practice, the model provider may employ a lightweight auxiliary model to predict or select the user’s preferred response 
based on their features and past interactions.}
\subsubsection{Robust Alignment and Universal Alignment}
We begin by defining the notion of \emph{robust alignment}.

\begin{definition}[$(k,f(k))$-Robust Alignment]
Let $f: \mathbb{N}_{>0} \to [0,1]$ be a  function, and let $k \in \mathbb{N}_{>0}$ denote the number of responses generated at test time. 
A policy $\pi: \+X \to \Delta(\+Y^k)$ is said to achieve $(k,f(k))$-robust alignment if, for every prompt $x \in \+X$, its win rate against any other policy $\pi': \+X \to \Delta(\+Y)$ satisfies~\footnote{We formally define the probability $\Pr[\pi \succcurlyeq \pi' \mid x]$ in~\Cref{sec:prelim}. 
For now, it can be interpreted as the probability that a randomly selected user from the population 
prefers at least one of the $k$ responses generated by $\pi$ to the response generated by $\pi'$.}
\begin{align*}
    \min_{\pi'} \Pr[\pi \succcurlyeq \pi' \mid x] \;\ge\; f(k).
\end{align*} 
\end{definition}

An immediate implication of $(k, f(k))$-robust alignment is that, for any prompt $x$, a robustly aligned policy $\pi$ must include every response $y$ that is the favorite of more than a $(1 - f(k))$-fraction of the population. Otherwise, if such a response $y$ is omitted by $\pi$, a policy $\pi'$ that always outputs $y$ on input $x$ would violate the $(k, f(k))$-robust alignment condition. Equipped with the definition of robust alignment, we now introduce the notion of \emph{asymptotic universal alignment}, which we may refer to simply as U-alignment throughout the paper.

\begin{definition}[Asymptotic Universal Alignment (U-Alignment)]
Let $f: \mathbb{N}_{>0} \to [0,1]$ be a rate function satisfying $\lim_{k \to \infty} f(k) = 1$. 
We say that a family of policies $\{\pi_k\}_{k \in \mathbb{N}_{>0}}$ achieves \emph{asymptotic universal alignment} with rate $f$ 
if, for every $k \in \mathbb{N}_{>0}$, the policy $\pi_k$ achieves $(k, f(k))$\nobreakdash-robust alignment. 
In other words, as the number of generated responses $k$ increases, the alignment rate $f(k)$ approaches~$1$, 
corresponding to alignment with almost all users in the population.
\end{definition}

 Note that any policy $\pi$ that assigns nonzero probability to every possible response under any prompt---e.g., one that samples a response uniformly at random---technically satisfies our notion of \ualign by sampling $k$ times independently from $\pi$ for each prompt.  However, the convergence rate in this case is extremely slow: unless $k$ is on the order of $|\Y|$, we have $f(k) = o(1)$. 
Such a convergence rate is clearly undesirable, both theoretically and practically. 
Can we approach universal alignment more efficiently?

\subsubsection{Our Result}
Our main result provides a positive answer to this question by characterizing the optimal convergence rate.

\begin{informaltheorem*}
There exists a family of single-output policies $\{\pi_k\}_{k \in \mathbb{N}_{>0}}$ such that the corresponding test-time scaled policies $\{\pi_k^{\otimes k}\}_{k \in \mathbb{N}_{>0}}$ achieve \ualign with rate $\nicefrac{k}{k+1}$. 
Here, each $\pi_k$ maps any prompt in $\X$ to a distribution over $\Y$, and $\pi_k^{\otimes k}$ denotes the policy that independently samples from $\pi_k$ $k$ times for each prompt. 
Moreover, no family of policies can achieve \ualign at a faster rate.
\end{informaltheorem*}

This statement provides the theoretical foundation for \ualign by characterizing the optimal convergence rate achievable through test-time scaling. The formal version of this informal theorem is given in \Cref{theorem:main}. In the remainder of this section, we examine the features, extensions, and implications of this result from four complementary perspectives. 

We begin by highlighting the importance of achieving this optimal rate using single-output policies, which are more consistent with current model architectures and training–inference pipelines than multi-output policies. Next, we address computational considerations, showing how optimal policies can be characterized via self-play in symmetric multi-player alignment games. We then analyze the limitations of existing post-training alignment methods under test-time scaling. Finally, we explore the broader connection between \ualign and diversity preservation in post-training alignment.

\paragraph{Optimal \ualign via Single-Output Policies.}
By drawing a connection to the notion of the Condorcet winning set in social choice theory, we can leverage existing results to establish the existence of a family of policies that achieve \ualign 
at rate $\nicefrac{k}{k+1}$ for any $k \ge 1$ (\Cref{theorem:multioutput alignment}). 
However, this existence result applies only to a \emph{multi-output} policy that maps a prompt $x \in \X$ directly to a distribution over $\Y^k$, 
which is less desirable than achieving the same guarantee by scaling a single-output policy at test time. 
In particular:
\begin{enumerate}
    \item training a multi-output policy is nonstandard and requires optimization over a response space whose size grows exponentially with $k$; and
    \item inference from a multi-output policy is slower, since parallel computation cannot be exploited for acceleration.    
\end{enumerate}

\noindent Our main result shows that, for any $k \in \mathbb{N}_{>0}$, a single-output policy suffices to achieve the same optimal convergence rate, thereby avoiding these drawbacks.

\paragraph{Computing the optimal \ualign policies via self-play.}
Our main result establishes the existence of a family of single-output policies that achieve the optimal convergence rate to \ualign. 
We next characterize this family by relating it to a class of symmetric $(k+1)$-player alignment games defined in~\Cref{dfn:multi-player alignment game}. 
For any $k \in \mathbb{N}_{>0}$, we show that the symmetric Nash equilibrium policy of such a game corresponds to $\pi_k$ in our main result (see~\Cref{theorem:single-output good}). 
Finally, we extend self-play methods from the two-player setting to the multi-player setting and provide convergence guarantees for these dynamics (\Cref{proposition:fixed point} and \Cref{prop:self-play}).

\paragraph{Limits of Test-Time Scaling of NLHF and RLHF.} 
We briefly review two popular alignment approaches and discuss their limitations under test-time scaling. Formal definitions can be found in \Cref{sec:pre RLHF NLHF}.
\begin{itemize}[leftmargin=*]
    \item \textbf{RLHF.}\; Reinforcement learning from human feedback (RLHF)~\citep{christiano2017deep, bai2022training, ouyang2022training} 
    assumes that aggregated human preferences follow the Bradley--Terry (BT) model~\citep{bradley1952rank}. 
    RLHF first trains a scalar reward model from human preference data, and then optimizes the base model with respect to this learned reward via reinforcement learning. 
    However, any scalar reward model fails to capture the full diversity of human preferences—particularly when preferences are non-transitive---introducing systematic bias and potential misalignment in RLHF~\citep{munos2024nash}. 

    \item \textbf{NLHF.}\; Nash learning from human feedback (NLHF)~\citep{munos2024nash, swamy2024minimaximalist} 
    relaxes the scalar reward assumption and directly models general preferences. 
    NLHF formulates alignment as a two-player zero-sum game and seeks its Nash equilibrium policy.
    The resulting Nash policy satisfies a desirable guarantee: it achieves at least a 50\% win rate against any other policy, 
    corresponding to $(1, \tfrac{1}{2})$-robust alignment in our framework. 
\end{itemize}

In summary, RLHF builds on the BT model and fails to capture diverse preferences, whereas NLHF achieves the optimal robust-alignment guarantee for $k=1$. Given NLHF's prevalence, one may wonder whether test-time scaling NLHF yields nontrivial convergence to \ualign.
We provide a negative answer in \Cref{prop:limit of NLHF}: for any $k\ge 1$, there exist instances in which test-time scaling of NLHF cannot guarantee $(k,\tfrac{1}{2}+\varepsilon)$-robust alignment beyond an arbitrarily small slack. 
The intuition is that NLHF can lack response diversity and may collapse to (nearly) deterministic policies, which makes additional samples redundant. We elaborate on this point in the next paragraph.

\paragraph{U-alignment and Diversity Preservation during Post-Training.}
A necessary condition for \ualign is the ability to generate a diverse set of responses that reflect heterogeneous human preferences. 
However, post-training alignment methods such as RLHF and NLHF often induce \emph{mode collapse}, reducing output diversity~\citep{kirk2024understanding}. 
When a majority of users prefer a particular response, RLHF and NLHF tend to converge toward producing it almost deterministically. 
Although aligning with the majority’s preference may seem optimal, disregarding minority preferences prevents RLHF and NLHF from achieving \ualign. 
The resulting lack of output diversity also limits the effectiveness of LLMs in applications that require creativity and variation, such as creative writing and synthetic data generation. 

Our work demonstrates that it is possible to post-train a model to improve alignment while preserving diversity, 
challenging the common view that post-training alignment inevitably leads to mode collapse and reduced diversity.

\subsection{Related Works}

\paragraph{Social Choice Theory} Social choice theory focuses on the problem of choosing a winner or a committee from a set of alternatives according to voters' preferences over the alternatives, which finds application in LLMs recently (see e.g., \citep{fish2024generative,shi2025fundamental,xiao2025theoretical}). We review the notions in social choice theory that are closely related to our work.  A well-known impossibility result is Condorcet's paradox. This paradox shows that a \emph{Condorcet winner}, an alternative that is preferred by at least half of the voters against any other alternative, may not exist. However, if we allow randomization, a \emph{maximal lottery}~\citep{kreweras1965aggregation,fishburn1984probabilistic} exists,  which is a distribution over alternatives and guarantees $\nicefrac{1}{2}$ expected win rate against any other alternative. Maximal lottery is the foundation for Nash learning from human feedback (NLHF)~\citep{munos2024nash,swamy2024minimaximalist}. Our work focuses on the test-time scaling version of NLHF and is related to another relaxed notion of Condorcet winner called \emph{Condorcet winning set}~\citep{elkind2011choosing,elkind2015condorcet}, a committee of alternatives such that there is no other alternative that has a win rate $> \nicefrac{1}{2}$ against all alternatives in the committee. A generalized notion of Condorcet winning set is \emph{$\alpha$-undominated committee}. A committee $S$ is $\alpha$-undominated if for all $a \notin S$, less than $\alpha$-fraction of voters prefer $a$ over each member of $S$. There is a line of work on finding the smallest $\alpha$-undominated set~\citep{cheng2020group, jiang2020approximately, charikar2025six,nguyen2025few}. Our alignment framework is closely related to the $\alpha$-undominated set, as achieving $(k,f(k))$-robust alignment is equivalent to finding a randomized committee of size $k$ that is $(1-f(k))$-undominated. We show in \Cref{sec:multi-output} that we can adapt existing results from social choice to show the existence of a multi-output policy that achieves \ualign with rate $\nicefrac{k}{k+1}$. However, multi-output policies are unsatisfactory in the LLM setting (see \Cref{sec:limits of multi-output policy} for a detailed discussion). Achieving \ualign with test-time scaling of single-output policy requires new insights from game theory, as we presented in \Cref{sec:single-output}.

\paragraph{Game-Theoretic Approaches in Alignment} As we have discussed, the Nash learning from human feedback (NLHF)~\citep{munos2024nash,swamy2024minimaximalist} approach formulates the LLM alignment problem as a two-player zero-sum game and aims to find the Nash equilibrium (NE) policy. The NE policy coincides with the maximal lottery in social choice and achieves a $\nicefrac{1}{2}$ win rate against any other policies. There are many recent developments on NLHF (see e.g., \citep{calandriello2024human,wu2025selfplay,zhang2025iterative,maura2025jackpot,liu2025statistical}). In particular, initiated by \citet{liu2024comal,wang2025magnetic}, there is a line of works on last-iterate convergent methods for NLHF~\citep{zhou2025extragradient,wu2025multi,tiapkin2025accelerating}, that applies recent advances in online learning and game theory~\citep{rakhlin2013optimization, daskalakis2018limit, wei2021linear, cai2022finite, cai2024fast} to LLM alignment.

\paragraph{Post-Training and Test-Time Scaling} One of our work's broader implications is that post-training and test-time scaling may not be aligned, and we need to carefully design the post-training objective in order to achieve good test-time performance. We discuss several related works on the misalignment between post-training and test-time scaling. The work by~\citet{yue2025does} shows that the model after post-training via reinforcement learning performs worse than the base model on pass@$K$ (the model is considered correct if at least one of the $K$ samples is correct) when $K$ is large, indicating that post-training may even hurt the reasoning capabilities of the model. Relatedly, \citet{chen2025rethinking} show that in math reasoning settings, supervised finetuning (SFT) with the cross entropy loss can be misaligned with the test-time scaling approach of pass@$K$. \citet{chen2025rethinking} then suggest a new SFT loss that is more aligned with pass@$K$. Our work focuses on post-training with general preferences, and we show that existing methods like RLHF and NLHF do not align well with test-time scaling (see \cref{prop:limit of NLHF} for a formal argument). Our new objective, U-alignment, leads to a provable test-time scaling guarantee.

\paragraph{Comparison with~\citep{wu2025multiplayer}} A concurrent and independent work~\citep{wu2025multiplayer} that also propose a multiplayer-game formulation for LLM alignment.  However, the game objectives in their work and our \Cref{dfn:multi-player alignment game} are very different:
\begin{itemize}
    \item In the game defined in \citep{wu2025multiplayer}, each player $i$'s utility is the \emph{averaged} win rate $\frac{1}{k}\sum_{j\ne i} \-P[{\pi_i \succ \pi_j}]$. This game is symmetric, and we note that the NLHF policy is a symmetric Nash equilibrium of the game. Therefore, solving this multi-player game does not offer a theoretical advantage over the two-player game in NLHF. Moreover, test-time scaling of the Nash equilibrium policy does not achieve \ualign (\Cref{prop:limit of NLHF}). 
    \item In our multi-player alignment game (\Cref{dfn:multi-player alignment game}), each player $i$'s utility is the win rate against the product policy of other players' policies $\-P[{\pi_i \succ \otimes_{j\ne i}\pi_j}]$. Our game is symmetric, and we prove that a symmetric Nash equilibrium achieves \ualign with optimal rate (\Cref{theorem:single-output good}).
\end{itemize}
We remark that~\citep{wu2025multiplayer} shows that with diverse opponents, training with their objective exhibits better empirical performance than NLHF. However, using different opponent models breaks the symmetry and no longer solves the original game. 

\section{Preliminaries}\label{sec:prelim}
Let the universe of prompts be $\+X$ and the universe of responses be $\+Y$, both assumed to be finite.
A single-output language model (policy) is a mapping $\pi: \+X \rightarrow \Delta(\+Y)$, and $\pi(y \mid x) \in [0,1]$ the  probability of outputting response $y \in \+Y$ given a prompt $x \in \+X$. 
A multi-output language policy $\pi: \+X \rightarrow \Delta(\+Y^k)$ specifies the probability $\pi(S\mid x)$ of outputting a multiset of responses $S \subseteq \+Y^k$ given a prompt $x \in \+X$.\footnote{Here we slightly abuse notations to denote the universe of multisets of size $k$ as $\+Y^k$, which usually denotes tuples of size $k$.} 
Given a single-output policy $\pi$ and $k \ge 1$, the product policy $\pi^{\otimes k}$ is the $k$-output policy that outputs $k$ independent samples from $\pi$ for any given prompt. For two multisets $S$ and $S'$, we define $S+S'$ to be the multiset union of $S$ and $S'$, i.e., the multiplicity of each element in $S+S'$ is the sum of its multiplicities in $S$ and in $S'$.

\subsection{Preferences} 
Our results hold for a general class of preferences. We first introduce the Plackett-Luce (PL) preference model~\citep{luce1959individual}, a generalization of the Bradley-Terry (BT) model~\citep{bradley1952rank} from comparing two responses to comparing two sets of responses, which is widely used in LLM alignment. We derive a simple and useful property of the aggregated population preference when each person has a PL preference. We then consider the standard social choice setting in which each person has a complete ranking of the responses. 

\paragraph{Mixtures of Plackett-Luce Models}
    There are $n$ different types of \emph{Plackett-Luce (PL)} preferences. Each preference $i \in [n]$ is associated with a reward function $r_i(x,y)$. For any pair of multisets of responses $S, S'$, the preference of $S$ over $S'$ is defined as
    \begin{align*}
        \-P_i[S \succcurlyeq S' \mid x] = \-P_i[S \succ S' \mid x] := \frac{\sum_{y \in S} \exp(r_i(x,y))}{\sum_{y \in S} \exp(r_i(x,y)) + \sum_{y\in S'} \exp(r_i(x,y))}, \forall x, S, S'.
    \end{align*}
    Given a distribution $\+D$ over $[n]$, the aggregated \emph{population preference} $\-P_{\+D}$ is a mixture of the $n$ PL preferences. 
    Specifically, for any $k, k'\ge 1$ and a $k$-output policy $\pi$ and  $k'$-output policy $\pi'$, the population preference of $\pi \succcurlyeq \pi'$ is defined as
    \begin{align*}
        \-P_{\+D}[\pi \succcurlyeq \pi'\mid x] = \-P_{\+D}[\pi \succ \pi'\mid x] := \-E_{i \sim \+D} \-E_{S \sim \pi(\cdot \mid x), S' \sim \pi'(\cdot \mid x)} \InBrackets{ \-P_i [S \succ S' \mid x]}, \forall x, \pi, \pi'.
    \end{align*}
    We note that the population preference $\-P_{\+D}$ has the following properties.
    \begin{property}\label{dfn:general preferences}
    A mixture of PL preferences $\-P_{\+D}$ satisfies the following properties
    \begin{itemize}
        \item[1.] \textnormal{Antisymmetry:} $\-P_{\+D}[\pi \succcurlyeq \pi'\mid x] + \-P_{\+D}[\pi' \succ \pi\mid x] = 1$ for any two multi-output policies $\pi, \pi'$ and prompt $x$. 
        \item[2.] \textnormal{Multi v.s. Single Copy:} $\-P_{\+D}[\pi^{\otimes k} \succcurlyeq \pi \mid x] \ge 1 - \frac{1}{k+1}$ for any single-output policy $\pi$ and $x$.
        \item[3.] \textnormal{Subadditivity:} $\-P_{\+D}[S_1 +S_2 \succ S \mid x] \le \-P_{\+D}[S_1 \succ S \mid x] + \-P_{\+D}[S_2 \succ S \mid x]$ for any 
        multisets $S_1, S_2, S$ and prompt $x$.
    \end{itemize}
    \end{property}
    \begin{remark}
        We note that all of our results hold for any population preference that has \Cref{dfn:general preferences}. In particular, the following two generalizations of the PL model have \Cref{dfn:general preferences} by definition: (1) each user's preference is a mixture of PL preference, and the population preference is a distribution over the possible mixtures; (2) based on (1), we further allow the population distribution for different prompts $x \in \+X$ to be different.
    \end{remark} 
\paragraph{Population of Rankings} Now, we consider the setting where each user has a total ranking over responses for each prompt.
We model each possible preference $\succ_i$ as orders $\{\succ_{i,x}\}_{x \in \+X}$ over $\+Y$ such that $y \succ_{i,x} y'$ means $y$ is preferred than $y'$ under prompt $x$. To incorporate ties between identical responses, we extend the preference to weak preferences $\succcurlyeq_i$ and define the following function
\begin{align*}
    \boldsymbol{1}[y \succcurlyeq_{i,x} y'] = \begin{cases}
        1, \quad y \succ_{i,x} y' \text{ or } y=y'\\
        0, \quad y' \succ_{i,x} y
    \end{cases}, \quad \quad \boldsymbol{1}[y' \succ_{i,x} y] = 1 -\boldsymbol{1}[y \succcurlyeq_{i,x} y'].
\end{align*}
Thus $\boldsymbol{1}[y \succcurlyeq_{i,x} y'] + \boldsymbol{1}[y' \succ_{i,x} y] = 1$ for all pairs of $(y,y')$. We extend the above function naturally to compare sets of responses $S, S' \subseteq \+Y$ for $\succ_{i,x}$ by comparing the most preferred response (the maximal element according to $\succ_{i,x}$) in each set. Formally, let $y_* \in S$ be the most preferred response in $S$, i.e., there is no other $y \in S$ such that $y \succ_{i,x } y_*$. Similarly, let $y_*' \in S'$ be the most preferred responses in $S'$. Then we define
\begin{align*}
    \boldsymbol{1}[S \succcurlyeq_{i,x} S'] = \boldsymbol{1}[y_* \succcurlyeq_{i,x} y_*'], \quad \quad \boldsymbol{1}[S \succ_{i,x} S'] = \boldsymbol{1}[y_* \succ_{i,x} y_*'].
\end{align*}

Given a population of users, a distribution $\+D$ over $\{\succ_i\}_{i \in [n]}$, the aggregated population preference $\-P_{\+D}$ is: for any $\pi, \pi'$ and $x$,
\begin{align*}
    \-P_{\+D} [ \pi \succcurlyeq \pi' \mid x] &:= \-E_{i \sim \+D}\-E_{S \sim \pi(\cdot \mid x), S' \sim \pi'(\cdot \mid x)} \InBrackets{ \boldsymbol{1}[S \succcurlyeq_{i,x} S']}, \\
    \-P_{\+D} [ \pi \succ \pi' \mid x] &:= \-E_{i \sim \+D}\-E_{S \sim \pi(\cdot \mid x), S' \sim \pi'(\cdot \mid x)} \InBrackets{ \boldsymbol{1}[S \succ_{i,x} S']}.
\end{align*}
We show in the following proposition that an aggregated preference $\-P_{\+D}$  from a population of rankings also has \Cref{dfn:general preferences}.
\begin{proposition}\label{prop:preference} $\-P_{\+D}$ satisfies \Cref{dfn:general preferences}.
\end{proposition}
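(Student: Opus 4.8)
The plan is to establish each of the three properties in \Cref{dfn:general preferences} by reducing it to a statement that holds \emph{pointwise}---for a fixed user type $i$ and fixed realizations of the sampled multisets---and then integrating over $i \sim \+D$ and the sampling randomness using linearity (or monotonicity) of expectation. The only property requiring a genuinely new idea is the \emph{Multi v.s. Single Copy} bound; antisymmetry and subadditivity reduce to elementary facts about comparing maximal elements of finite sets under a total order.

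For antisymmetry, I would first observe that for fixed $i, x$ and fixed multisets $S, S'$ with maximal elements $y_*, y_*'$, the set-level comparison inherits the pairwise identity: $\boldsymbol{1}[S \succcurlyeq_{i,x} S'] + \boldsymbol{1}[S' \succ_{i,x} S] = \boldsymbol{1}[y_* \succcurlyeq_{i,x} y_*'] + \boldsymbol{1}[y_*' \succ_{i,x} y_*] = 1$, which is exactly the defining identity given for pairs. Taking $\-E_{i}\-E_{S,S'}$ of both sides yields antisymmetry. For subadditivity, the key observation is that the maximal element of $S_1 + S_2$ is the $\succ_{i,x}$-better of $\max S_1$ and $\max S_2$; hence if $S_1+S_2 \succ_{i,x} S$, then at least one of $\max S_1$ or $\max S_2$ strictly beats $\max S$, giving the pointwise union bound $\boldsymbol{1}[S_1+S_2 \succ_{i,x} S] \le \boldsymbol{1}[S_1 \succ_{i,x} S] + \boldsymbol{1}[S_2 \succ_{i,x} S]$. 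Taking $\-E_{i \sim \+D}$ of both sides yields the claim.

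The crux is the \emph{Multi v.s. Single Copy} property, where the main idea is a symmetry/exchangeability argument that crucially uses the fact that \emph{both} $\pi^{\otimes k}$ and the single opponent draw from the same $\pi$. Writing the $k$ outputs of $\pi^{\otimes k}$ as $y_1, \dots, y_k$ and the opponent's single output as $y_0$, all of $y_0, y_1, \dots, y_k$ are i.i.d.\ from $\pi$. By antisymmetry it suffices to upper bound $\-E_i \-E[\boldsymbol{1}[\{y_0\} \succ_{i,x} \{y_1,\dots,y_k\}]]$, and under a total order the event $\{y_0\} \succ_{i,x} \{y_1,\dots,y_k\}$ coincides with the event $A_0$ that $y_0$ strictly beats every $y_j$, $j \in [k]$ (since $y_0$ beating the best of $S$ is the same as $y_0$ beating all of $S$). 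Defining $A_j$ analogously as the event that $y_j$ is the strict winner among all $k+1$ draws, the events $A_0, \dots, A_k$ are pairwise disjoint, because two distinct draws cannot each strictly beat the other, so $\sum_{j=0}^k \Pr[A_j \mid i] \le 1$; and since the draws are i.i.d., exchangeability gives $\Pr[A_j \mid i] = \Pr[A_0 \mid i]$ for all $j$. Hence $\Pr[A_0 \mid i] \le \tfrac{1}{k+1}$ for every $i$, and integrating over $i \sim \+D$ together with antisymmetry gives $\-P_{\+D}[\pi^{\otimes k} \succcurlyeq \pi \mid x] = 1 - \-E_i \Pr[A_0 \mid i] \ge 1 - \tfrac{1}{k+1}$.

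I expect the main obstacle to be handling ties correctly in this last argument: when two draws coincide, neither strictly beats the other, so the strict-winner events $A_j$ remain well defined and disjoint, but one must verify that the tie convention built into $\boldsymbol{1}[\cdot \succcurlyeq_{i,x} \cdot]$ and $\boldsymbol{1}[\cdot \succ_{i,x} \cdot]$ is consistent with identifying $\{y_0\}\succ_{i,x}\{y_1,\dots,y_k\}$ with $A_0$. Checking this bookkeeping---and confirming that exchangeability survives the tie convention---is the only delicate point; the remaining steps are routine applications of linearity of expectation.
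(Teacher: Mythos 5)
Your proposal is correct and follows essentially the same route as the paper: antisymmetry and subadditivity via pointwise identities on maximal elements followed by expectation, and the Multi v.s.\ Single Copy bound via an exchangeability argument over $k+1$ i.i.d.\ draws, where at most one draw can strictly beat all the others. The paper phrases this last step with an explicit uniformly random permutation of the draws and the pointwise bound $\sum_{j=1}^{k+1}\mathbf{1}[Z_j \succ Z\setminus\{Z_j\}] \le 1$, which is exactly your disjointness of the strict-winner events $A_0,\dots,A_k$ combined with $\Pr[A_j\mid i]=\Pr[A_0\mid i]$, so the two write-ups are the same argument, and your tie-handling (no strict winner when draws coincide, consistent with the paper's weak-preference convention) is also correct.
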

\begin{proof}
For the antisymmetry property, let us fix $\succ_i$. When we draw $S\sim\pi(\cdot\mid x)$, $S'\sim\pi'(\cdot\mid x)$, we have the point-wise identity
\[
\mathbf{1}[S\succcurlyeq_{i,x}S']+\mathbf{1}[S'\succ_{i,x}S]=1.
\]
Taking the expectation over $\succ_i \sim \+D$ and the draws of $S,S'$ yields the antisymmetry property.

For the second property, let us fix any single-output policy $\pi$, any ranking $\succ_i$,  and any prompt $x$. Draw $k+1$ i.i.d. samples $Z_1,\dots,Z_k,Z_{k+1}\sim \pi(\cdot\mid x)$. Let us also sample a permutation $\sigma$ over the index set $[k+1]$ uniformly at random and define $Y_i = Z_{\sigma(i)}$ for $i \in [k+1]$. It is clear that $\{Y_i\}_{i \in [k+1]}$ are also $k+1$ i.i.d. samples from $\pi(\cdot \mid x)$. Now fix $Z =\{Z_i\}_{i \in [k+1]}$, let $y = Y_{k+1}$ and $S = \{Y_1, \ldots, Y_k\}$. Over the randomness of the permutation $\sigma$, we have
\[
\mathbb{E}\big[\mathbf{1}[y\succ_{i,x} S ]\big]
= \frac{1}{k+1} \sum_{j=1}^{k+1} \mathbf{1}[Z_j\succ Z \setminus\{Z_j\}]
\le \frac{1}{k+1}, \quad \mathbb{E}\big[\mathbf{1}[S\succcurlyeq_{i,x} y]\big] = 1- \mathbb{E}\big[\mathbf{1}[y\succ_{i,x} S ]\big] \ge 1 - \frac{1}{k+1}.
\]
Since the above holds for all realizations of $Z$ and all $\succ_i$, we can take the expectation over Z and $\succ_i\sim \+D$ to
conclude
\[
\-P_{\+D}[\pi^{\otimes k}\succcurlyeq \pi\mid x]\ge1-\frac{1}{k+1}.
\]

For the third property, we note that the maximal element in $S_1 \cup S_2$ either lies in $S_1$ or lies in $S_2$. Therefore, we have
\[
\mathbf{1}[S_1 + S_2 \succ_i S \mid x] \le \mathbf{1}[S_1\succ_i S \mid x] + \mathbf{1}[S_2 \succ_i S \mid x]
\]
Taking expectation over $\succ_i \sim \+D$ completes the proof.
\end{proof}
\begin{remark}
Since the above proof applies to each fixed prompt $x$, \Cref{dfn:general preferences} also holds for the generalized model in which the population distribution may depend on $x$.\end{remark}

Throughout the paper, we assume a general population preference $P_D$ satisfying \Cref{dfn:general preferences}. 
This includes, in particular, mixtures of Plackett--Luce (PL) preferences, as well as the standard social choice setting in which each user has a ranking over responses for each prompt.

\subsection{Alignment Methods}\label{sec:pre RLHF NLHF}
We briefly introduce two existing LLM alignment methods.
\paragraph{Reinforcement Learning From  Human Feedback (RLHF)} RLHF~\citep{christiano2017deep, bai2022training, ouyang2022training} first learns a reward function $r: \+X \times \+Y \rightarrow [0,1]$ according to the Bradley-Terry model from a preference dataset. Given a reference policy $\pi_{\mathrm{ref}}$, the RLHF policy is a solution that maximizes the reward subject to KL constraints:
\begin{align*}
    \pi_{\mathrm{RLHF}}(\cdot \mid x):= \max_{\pi: \+X \rightarrow \Delta(\+Y)}  \-E_{y \sim \pi(\cdot\mid x)} \-E \InBrackets{r(x,y) - \eta \cdot \mathrm{KL}(\pi(\cdot \mid x), \pi_{\mathrm{ref}}(\cdot \mid x))}, \forall x \in \+X.
\end{align*}
A critical limitation of RLHF is that the BT model fails to capture diverse, possibly nontransitive, human preferences~\citep{munos2024nash}. 

\paragraph{Nash Learning From Human Feedback (NLHF)} Given a population preference $\-P_{\+D}$, NLHF~\citep{,munos2024nash,swamy2024minimaximalist} aims to find a Nash equilibrium policy $\pi_{\mathrm{NLHF}}: \+X \rightarrow \Delta(\+Y)$ of the following two-player constant-sum game:
\begin{align*}
    \max_{\pi(\cdot\mid x)} \min_{\pi'(\cdot \mid x)} \-P_{\+D} \InBrackets{ \pi( \cdot \mid x) \succcurlyeq \pi'(\cdot \mid x)}, \forall x \in \+X.
\end{align*} 
The NLHF policy $\pi_{\mathrm{NLHF}}$ achieves a $\tfrac{1}{2}$ win rate against any other model under $\+P_{\+D}$.

\section{U-Alignment via Multi-Output Policy: Possibility and Limitations}\label{sec:multi-output}
We note that by adapting existing results for the Condorcet winning set from the social choice theory literature, we can prove the possibility of \ualign with rate $\nicefrac{k}{k+1}$ using \emph{multi-output policies}. A $k$-output policy $\pi:\+X \rightarrow \Delta(\+Y^k)$ is randomized mapping that returns a $k$-set of responses for each prompt $x$. The following theorem is adapted from~\citep{cheng2020group, jiang2020approximately, charikar2025six}. 

\begin{theorem}[Possibility of $(k,\nicefrac{k}{k+1})$-Robust Alignment Using a Multi-Output Policy]
\label{theorem:multioutput alignment}
For any $k\ge 1$ and any population preference $P_{\+D}$ satisfying \Cref{dfn:general preferences}, there exists a $k$-output policy $\pi:\mathcal{X}\to\Delta(\mathcal{Y}^k)$ such that for every prompt $x\in\mathcal{X}$ and every response $y\in\mathcal{Y}$,
\[
P_{\+D}\!\left[\pi \succcurlyeq y \mid x\right]\ge \frac{k}{k+1}.
\]
\end{theorem}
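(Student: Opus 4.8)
The plan is to cast the statement, for each fixed prompt $x$, as the value of a finite two-player zero-sum game and then invoke the minimax theorem. Fix a prompt $x \in \+X$. Let the \emph{committee player} choose a $k$-multiset $S \in \+Y^k$ (a pure strategy), and let the \emph{adversary} choose a single response $y \in \+Y$; the payoff to the committee player is $\-P_{\+D}[S \succcurlyeq y \mid x]$. Both pure strategy sets are finite since $\+Y$ is finite, and by the definition of $\-P_{\+D}$ the payoff is linear (indeed an expectation) in each player's mixed strategy. Hence von Neumann's minimax theorem applies, and writing $\pi \in \Delta(\+Y^k)$ for a committee mixed strategy and $\mu \in \Delta(\+Y)$ for an adversary mixed strategy, the game value satisfies
\[
\max_{\pi} \min_{y \in \+Y} \-P_{\+D}[\pi \succcurlyeq y \mid x] \;=\; \min_{\mu} \max_{S} \-P_{\+D}[S \succcurlyeq \mu \mid x].
\]

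Next I would lower-bound the right-hand side. The key observation is that an adversary mixed strategy $\mu \in \Delta(\+Y)$ is precisely a single-output policy at prompt $x$, so I can apply the Multi v.s. Single Copy property in \Cref{dfn:general preferences} to the product policy $\mu^{\otimes k}$, which gives $\-P_{\+D}[\mu^{\otimes k} \succcurlyeq \mu \mid x] \ge \tfrac{k}{k+1}$. Since $\mu^{\otimes k}$ is one particular distribution over $k$-multisets, its win rate is at most that of the best pure response, i.e. $\max_{S} \-P_{\+D}[S \succcurlyeq \mu \mid x] \ge \-E_{S \sim \mu^{\otimes k}}\big[\-P_{\+D}[S \succcurlyeq \mu \mid x]\big] = \-P_{\+D}[\mu^{\otimes k} \succcurlyeq \mu \mid x] \ge \tfrac{k}{k+1}$. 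As this holds for every $\mu$, the right-hand side of the minimax identity is at least $\tfrac{k}{k+1}$.

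Combining the two, the left-hand side—the max over committee mixed strategies of the worst-case win rate against a single response—is also at least $\tfrac{k}{k+1}$. Therefore there is a maximizing committee strategy $\pi^*_x \in \Delta(\+Y^k)$ with $\-P_{\+D}[\pi^*_x \succcurlyeq y \mid x] \ge \tfrac{k}{k+1}$ for every $y \in \+Y$. Defining the $k$-output policy by $\pi(\cdot \mid x) := \pi^*_x$ for each prompt (the entire argument is per-prompt) yields the claimed policy.

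The individual steps are short; the part requiring the most care is the setup rather than any computation. Specifically, the crux is the conceptual reduction: recognizing that the natural test opponent is a single-output policy $\mu$, that the committee player's natural reply is the i.i.d.\ product $\mu^{\otimes k}$ (for which the Multi v.s.\ Single Copy property delivers exactly the $\tfrac{k}{k+1}$ bound), and that minimax duality is what converts this per-opponent guarantee into a single committee policy that is simultaneously good against every response $y$. I would also verify the two routine facts the argument leans on: that $\-P_{\+D}[\cdot \succcurlyeq \cdot \mid x]$ is bilinear in the players' mixed strategies, so the minimax theorem is applicable, and that passing from a mixed committee strategy to the best pure multiset can only increase the payoff.
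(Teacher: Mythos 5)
Your proof is correct and follows essentially the same route as the paper's: per-prompt minimax duality via von Neumann's theorem, then lower-bounding the dual side by observing that against any single-output mixed strategy $\mu$ the product policy $\mu^{\otimes k}$ is a feasible committee reply whose win rate is at least $\nicefrac{k}{k+1}$ by the Multi v.s.\ Single Copy property. The only cosmetic difference is that you pass through the best pure multiset $S$, while the paper keeps the inner maximization over all of $\Delta(\mathcal{Y}^k)$ directly; both steps are immediate and equivalent.
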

\begin{proof}
    Fix any $x \in \+X$. It suffices to prove that 
    \begin{align*}
        \max_{\pi(\cdot \mid x) \in \Delta(\+Y^k)}\min_{ y\in \+Y}  \-P_{\+D}[\pi \succcurlyeq y \mid x] = \max_{\pi(\cdot \mid x) \in \Delta(\+Y^k)}\min_{\pi'\in \Delta(\+Y)}  \-P_{\+D}[\pi \succcurlyeq \pi' \mid x] \ge  \frac{k}{k+1}
    \end{align*}
    By von Neumann's minimax theorem, we have
    \begin{align*}
        \max_{\pi(\cdot \mid x) \in \Delta(\+Y^k)}\min_{\pi'\in \Delta(\+Y)}  \-P_{\+D}[\pi \succcurlyeq \pi' \mid x] &=
        \min_{\pi'\in \Delta(\+Y)}\max_{\pi(\cdot \mid x) \in \Delta(\+Y^k)}  \-P_{\+D}[\pi \succcurlyeq \pi' \mid x] \\
        &\ge \min_{\pi'\in \Delta(\+Y)}\-P_{\+D}[(\pi')^{\otimes k} \succcurlyeq \pi' \mid x] \\
        &\ge \frac{k}{k+1},
    \end{align*}
    where the first inequality holds since the $k$-product distribution $(\pi')^{\otimes k}$ is a valid $k$-output policy, and the last inequality holds by \Cref{dfn:general preferences}.
\end{proof}

We remark that \Cref{theorem:multioutput alignment} also gives a method for computing a $(k, \nicefrac{k}{k+1})$-robust aligned multi-output policies. We can solve the Nash equilibria of the following two-player constant-sum game:
\begin{align}\label{eq:multioutput game}
    \max_{\pi(\cdot \mid x) \in \Delta(\+Y^k)}\min_{\pi'\in \Delta(\+Y)}  \-P_{\+D}[\pi \succcurlyeq \pi' \mid x]
\end{align}
This two-player game objective~\eqref{eq:multioutput game} generalizes Nash Learning from Human Feedback (NLHF)~\citep{munos2024nash}, which is the special case of \eqref{eq:multioutput game} with $k =1$. 

\subsection{Limitations of Multi-Output Policies}\label{sec:limits of multi-output policy}
Although \Cref{theorem:multioutput alignment} guarantees the existence of a $(k, \nicefrac{k}{k+1})$-robust aligned multi-output policy, it does so by optimizing over general multi-output policies rather than standard single-output policies. This is unsatisfactory and impractical from both optimization and efficiency standpoints:
\begin{itemize}
    \item 
    The space of $k$-output policies, i.e.,\ distributions in $\Delta(\mathcal{Y}^k)$, is vastly larger---and typically harder to optimize over---than the space of single-output policies $\Delta(\mathcal{Y})$. 
When $|\mathcal{Y}|=m$, a $k$-output policy is a distribution over $\mathcal{Y}^k$, whose support can be as large as $m^k$ (exponential in $k$), whereas a single-output policy is a distribution over $\mathcal{Y}$ with support size at most $m$.
    \item     Training a multi-output policy departs from standard practice, since mainstream LLM training pipelines are built for single-output policies. 
Although one could plausibly modify existing architectures to produce $k$ responses, such modifications would entail significant changes to the training pipeline, making them impractical in most settings.
    \item  The objective in \eqref{eq:multioutput game} is \emph{asymmetric}. As a result, self-play approaches to finding a Nash equilibrium must maintain and update two separate models, one for each player. This is less practical than maintaining and training a single model.
\end{itemize}
In light of the optimization and efficiency concerns for multi-output policies, together with the asymmetry of \eqref{eq:multioutput game}, existing results from social choice theory do not immediately carry over to the LLM alignment setting, where one typically trains a single-output policy.

\subsection{Tightness of the Rate $\nicefrac{k}{k+1}$}
We show that the rate $\nicefrac{k}{k+1}$ is tight with two lower bounds. The first lower bound assumes a Plackett-Luce (PL) preference model. We show that for any $k \ge 1$, there exists a PL preference such that it is impossible to achieve $(k, f(k))$-robust alignment with $f(k) > \nicefrac{k}{k+1}$. 
\begin{proposition}[Lower Bound for Plackett-Luce Model]\label{prop:lower bound PL}
    For any $k \ge 1$, there exists a Plackett-Luce preference $\-P$ such that for any policy $\pi: \+X \rightarrow \Delta(\+Y^k)$, there exists a prompt $x \in \+X$ and a response $y \in \+Y$ such that $\-P\InBrackets{\pi \succcurlyeq y | x} \le \nicefrac{k}{k+1}$.
\end{proposition}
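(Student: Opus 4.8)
The plan is to exhibit a single, completely symmetric Plackett--Luce preference under which \emph{every} $k$-output policy achieves win rate exactly $\nicefrac{k}{k+1}$ against \emph{every} single response, so that no policy can beat this rate. Concretely, I would take $\+X=\{x_0\}$ to be a single prompt (one prompt suffices for the existential claim) and let $\+Y$ be any finite response set, and set the reward to be constant, say $r(x_0,y)=0$ for all $y\in\+Y$. Writing $w_y:=\exp(r(x_0,y))$ for the induced Luce weights, this makes $w_y=1$ uniformly; any other constant reward would work equally well, so the unbounded-reward issue is irrelevant.

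The key computation is that, since the PL comparison depends only on the \emph{total} Luce weight of each side, the win rate of a size-$k$ multiset against a single response collapses to a ratio of cardinalities. For any multiset $S$ of size $k$ and any $y\in\+Y$,
\[
\-P\InBrackets{S\succcurlyeq y\mid x_0}=\frac{\sum_{y'\in S}w_{y'}}{\sum_{y'\in S}w_{y'}+w_y}=\frac{k}{k+1},
\]
where $\sum_{y'\in S}w_{y'}=k$ counts multiplicity. This identity holds for \emph{all} $S$ and $y$, including the degenerate cases where $S$ contains copies of $y$; the PL formula imposes no disjointness between the two sides, so no special casing is required. Taking the expectation over $S\sim\pi(\cdot\mid x_0)$ for an arbitrary $k$-output policy $\pi$, linearity yields $\-P[\pi\succcurlyeq y\mid x_0]=\nicefrac{k}{k+1}$ for every $y$, hence $\min_{y}\-P[\pi\succcurlyeq y\mid x_0]=\nicefrac{k}{k+1}\le\nicefrac{k}{k+1}$. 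This is exactly the required bound for the prompt $x_0$ and any target $y$.

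There is no substantive technical obstacle; the only things to get right are the bookkeeping of multiset cardinalities (counting multiplicity) and the observation that the symmetric preference renders the win rate independent of the policy. The conceptual point worth flagging is that this construction is stronger than strictly needed: rather than merely producing \emph{some} policy that fails to exceed $\nicefrac{k}{k+1}$, it pins \emph{every} $k$-output policy to exactly $\nicefrac{k}{k+1}$, so the upper bound of \Cref{theorem:multioutput alignment} is attained with equality on this instance, confirming tightness of the rate in the strongest sense. If one prefers a construction that does not rely on symmetry, a mild perturbation of the weights would instead force a strict inequality $\-P[\pi\succcurlyeq y\mid x_0]<\nicefrac{k}{k+1}$ for a suitably chosen $y$, but the symmetric instance already delivers the claim most cleanly.
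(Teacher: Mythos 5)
Your proposal is correct and matches the paper's proof essentially verbatim: the paper likewise takes a single prompt with a constant-reward (uniform-weight) Plackett--Luce preference so that every size-$k$ multiset $S$ satisfies $\-P[S \succcurlyeq y \mid x] = \nicefrac{k}{k+1}$, then concludes by taking the expectation over $S \sim \pi(\cdot \mid x)$. The only cosmetic differences are your choice of reward value ($0$ versus the paper's $1$) and your observation that $\+Y$ may be arbitrary finite (the paper fixes $|\+Y| = k+1$, which is not needed), neither of which changes the argument.
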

\begin{proof}
    Consider a simple case where $\+X = \{x\}$ and $\+Y = \{y_1, y_2, \ldots, y_{k+1}\}$. Consider a Plackett-Luce preference with a uniform reward function $r_i(x,y) = 1$ for all $y \in \+Y$. For any (multi)-set $S$ of size $k$, we have $\-P[S\succcurlyeq y_1\mid x] = \frac{k}{k+1}$. Thus for any policy $\pi: \+X \rightarrow \Delta(\+Y^k)$, we have $\-P[\pi \succcurlyeq y_1] = \-E_{S \sim \pi(\cdot \mid x)}[P[S\succcurlyeq y_1\mid x]] = \frac{k}{k+1}$. This completes the proof.
\end{proof}
The second lower bound is for a population of rankings. We show that for any $k \ge 1$, there exists a population preference over rankings such that the best-possible policy achieves $(k, f(k))$-robust alignment with $f(k) \le \nicefrac{k}{k+1}\cdot(1 + \nicefrac{1}{|\+Y|}) \rightarrow \nicefrac{k}{k+1}$ when the total number of responses $|\+Y|\rightarrow \infty$.

\begin{proposition}[Lower Bound for Rankings]\label{prop:lower bound rankings}
    For any $k \ge 1$, there exists a population preference over rankings $\-P_{\+D}$ such that for any policy $\pi: \+X \rightarrow \Delta(\+Y^k)$, there exists a prompt $x \in \+X$ and a response $y \in \+Y$ such that $\-P_{\+D}\InBrackets{\pi \succcurlyeq y \mid x} \le \nicefrac{k}{k+1}\cdot(1 + \nicefrac{1}{|\+Y|})$.
\end{proposition}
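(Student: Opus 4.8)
The plan is to exhibit a single-prompt instance whose population is the \emph{uniform distribution over all total rankings} of $\+Y$, and then defeat every policy at once through a symmetry/averaging argument rather than by constructing an explicit adversarial response. Concretely, I would take $\+X=\{x\}$, $\+Y=\{y_1,\dots,y_m\}$ with $m=|\+Y|$, and let $\+D$ be uniform over all $m!$ strict total orders of $\+Y$. The guiding intuition is that when every ranking is equally likely, no size-$k$ committee can be ranked high by a disproportionate fraction of users, so the best a committee can do against an adversarially chosen single response is governed precisely by the expected top position of $k$ random elements.

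First I would reduce the ``$\min_y$'' guarantee to an averaging bound. Since $\min_{y}\-P_{\+D}[\pi\succcurlyeq y\mid x]\le \frac1m\sum_{y}\-P_{\+D}[\pi\succcurlyeq y\mid x]$, and since the antisymmetry property of \Cref{dfn:general preferences} gives $\-P_{\+D}[\pi\succcurlyeq y\mid x]=1-\-E_{S\sim\pi}\big[\-P_{\+D}[y\succ S\mid x]\big]$, it suffices to lower bound $\sum_{y}\-P_{\+D}[y\succ S\mid x]$ uniformly over all size-$k$ multisets $S$. The key identification is that, for a fixed ranking $\succ_i$, the responses $y$ with $y\succ_i S$ are exactly those ranked strictly above the top element of $S$; their count therefore equals the $0$-indexed position of the top element of $S$, i.e.\ $\min_{b\in S}\mathrm{pos}_i(b)$. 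Averaging over $\+D$ yields $\sum_y\-P_{\+D}[y\succ S\mid x]=\-E_{i\sim\+D}\big[\min_{b\in S}\mathrm{pos}_i(b)\big]$.

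Now the central computation: under the uniform distribution over rankings, the positions occupied by the distinct elements of $S$ form a uniformly random subset of $\{0,1,\dots,m-1\}$ of size $k'=$ (number of distinct responses in $S$), and the expected minimum of such a subset is the standard quantity $\frac{m-k'}{k'+1}$. Because $t\mapsto\frac{m-t}{t+1}$ is decreasing in $t$, repeated elements only increase the value, so every size-$k$ multiset $S$ satisfies $\sum_y\-P_{\+D}[y\succ S\mid x]\ge\frac{m-k}{k+1}$. Combining with the reduction gives, for every policy $\pi$, $\frac1m\sum_y\-P_{\+D}[\pi\succcurlyeq y\mid x]\le 1-\frac1m\cdot\frac{m-k}{k+1}=\frac{k}{k+1}\big(1+\frac1m\big)$, so some response $y$ achieves $\-P_{\+D}[\pi\succcurlyeq y\mid x]\le\frac{k}{k+1}(1+1/|\+Y|)$, and the bound tends to $\frac{k}{k+1}$ as $|\+Y|\to\infty$.

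The only step that requires genuine care is the evaluation $\-E[\min\text{ of a random }k\text{-subset of }\{0,\dots,m-1\}]=\frac{m-k}{k+1}$, which I would verify via $\-E[\min]=\sum_{t\ge1}\Pr[\min\ge t]=\sum_{t=1}^{m-k}\binom{m-t}{k}/\binom{m}{k}$ together with the hockey-stick identity; the remaining manipulations (the antisymmetry rewrite, the monotonicity in support size, and the final algebra) are routine bookkeeping. The conceptual crux is recognizing that full permutation symmetry makes the per-response \emph{average} win rate completely independent of $\pi$, so a single averaging inequality simultaneously rules out every policy—including multi-output ones—without needing to identify the worst-case response explicitly.
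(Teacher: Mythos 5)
Your proof is correct, and it uses the same adversarial instance as the paper (a single prompt, with the population uniform over all $m!$ rankings of $\+Y$), but it reaches the bound by a noticeably different route. The paper argues per response: for each fixed multiset $S$ and each $y\notin S$, exchangeability of the $k+1$ elements of $\{y\}+S$ under a uniform random ranking gives $\-P_{\+D}[y\succ S\mid x]\ge \nicefrac{1}{k+1}$, hence $\-P_{\+D}[S\succcurlyeq y\mid x]\le \mathbf{1}[y\in S]\cdot\frac{1}{k+1}+\frac{k}{k+1}$; it then pigeonholes on membership probabilities (since $\sum_{j}\Pr_{S\sim\pi}[y_j\in S]\le k$, some $y_j$ has $\Pr[y_j\in S]\le k/m$) to name the adversarial response. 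You instead average the win rates over all $y\in\+Y$ and evaluate the per-multiset sum \emph{exactly}: the identity $\sum_y \-P_{\+D}[y\succ S\mid x]=\-E\bigl[\min_{b\in S}\mathrm{pos}(b)\bigr]$ together with the expected minimum $\frac{m-k'}{k'+1}$ of a uniform random $k'$-subset of positions (and monotonicity of $t\mapsto\frac{m-t}{t+1}$ to handle repeated elements) yields the same bound $\frac{k}{k+1}\bigl(1+\frac{1}{m}\bigr)$. All your steps check out: the antisymmetry rewrite is licensed by \Cref{dfn:general preferences} since a fixed response is a deterministic single-output policy, the identification of $\{y: y\succ_i S\}$ with the responses above the top of $S$ is consistent with the paper's tie-handling convention (elements of $S$ itself never strictly beat $S$), and the order-statistics computation via the hockey-stick identity is standard. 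The trade-off: the paper's exchangeability inequality is a one-liner that reuses the symmetry argument from \Cref{prop:preference} and needs no order-statistics machinery, while your identity is exact, pinning down the average win rate of every policy precisely — which makes transparent that the averaging step is tight for policies supported on all-distinct $k$-sets, so this construction cannot yield a stronger lower bound than stated. The two pigeonholes (on membership probability versus on the average win rate) are equivalent in strength here, since the paper's per-response bound holds with equality for all-distinct multisets.
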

\begin{proof}
    Let $\+X = \{x\}$ and $\+Y = \{y_1, \ldots, y_{m}\}$. We consider the population preference $\-P_{\+D}$ generated from the uniform distribution over all the permutations of $m$ responses, i.e., $m!$ rankings $\{\succ_i\}_{i \in [m!]}$. Consider any multiset $S \in \+Y^k$ and $y \in \+Y$, we have
    \[
    \-P_{\+D}\InBrackets{S \succcurlyeq y\mid x} \le \mathbf{1}\InBrackets{y \in S} + \mathbf{1}\InBrackets{y \notin S} \cdot \frac{k}{k+1}=\mathbf{1}\InBrackets{y \in S}\cdot \frac{1}{k+1} + \frac{k}{k+1},
    \]
    where the inequality holds since (1) $y \in S \implies S \succcurlyeq_i y$ for all $i$; (2) $y \notin S \implies \-P_{\+D}[y \succ S] \ge \frac{1}{k+1}$ as the win rate only depends on the relative ordering between $\{y\} \cup S$ and the population is the uniform distribution over all permutations of $m$ responses.
    
    Now let us fix any policy $\pi: \+X \rightarrow \Delta(\+Y^k)$. Since the policy $\pi$ samples $k$ responses, we have
    \[
    \sum_{j=1}^{m} \Pr_{S \sim \pi(\cdot \mid x)}[y_j \in S] \le k.
    \]
    Therefore, there exists $j \in [m]$ such that the response $y_j$ satisfies
    \[
    \Pr_{S \sim \pi(\cdot \mid x)}[y_j \in S] \le \frac{k}{m}.
    \]
    Consequently, we have
    \begin{align*}
       \-P_{\+D}[\pi \succcurlyeq y_j \mid x] \le \-E_{S \sim \pi(\cdot\mid x)}\InBrackets{ \mathbf{1}\InBrackets{y_j \in S}\cdot \frac{1}{k+1} + \frac{k}{k+1}} \le \frac{k}{m\cdot(k+1)} + \frac{k}{k+1}.
    \end{align*}
    This completes the proof.
\end{proof}

\section{U-Alignment via Single-Output Policy: Power of Test-Time Scaling}\label{sec:single-output}
In this section, we explore the power of standard single-output policies. In the context of alignment, the Nash learning from human feedback (NLHF) policy gives $\frac{1}{2}$ win rate against any other single-output policy, but the win rate can not be further improved. However, the barrier of $\frac{1}{2}$ for single-output policies is for sampling and evaluating \emph{one} response, but not for sampling $k$ responses in test time. A natural question is 
\begin{equation}
   \textit{Can test-time scaling of a \textbf{single-output policy} achieve \ualign with rate $\nicefrac{k}{k+1}$?} \tag{$\star$}
\end{equation}

More formally, for any $k \ge 1$, does there exist a single-output policy $\pi$ such that the test-time scaled policy $\pi^{\otimes k}$ achieves $(k, \nicefrac{k}{k+1})$-robust alignment?
We remark that even proving the existence of such a policy is nontrivial. Although test-time scaling is clearly no worse than sampling a single response, it is unclear whether---and to what extent---a policy exhibits strong test-time performance. In particular, even if a model is optimal for $k=1$, test-time scaling of this model need not yield improved performance.

\subsection{Limits of Test-Time Scaling of NLHF}\label{sec:limits_of_nlhf_and_rlhf}

We illustrate this phenomenon by showing that test-time scaling the exact NLHF policy, even though it is optimal for $k=1$, need not improve alignment beyond $\tfrac{1}{2}$, even when the number of samples is very large. In particular, we prove a fundamental limitation for NLHF: for any $k>1$, the $k$-sample policy obtained by drawing $k$ i.i.d.\ responses from the NLHF policy cannot guarantee a win rate above $\tfrac{1}{2}$ except for an arbitrarily small slack. Our lower bound is established in a simple non-contextual setting, thereby strengthening the impossibility result.

\begin{proposition}[Limits of Test-Time Scaling of NLHF]\label{prop:limit of NLHF}
For any $\varepsilon \in (0,\tfrac{1}{2})$, there exist a response set $\+Y$, a reward function $r:\+Y\to\mathbb{R}$, and a population preference model $\-P_{\+D}$ such that the following statements hold the population preference $\-P_{\+D}$ admits a unique NLHF policy $\pi_{\mathrm{NLHF}}$. Moreover, for every $k\ge 1$,
\[
    \min_{y\in\+Y}\; \-P_{\+D}\!\big[\pi_{\mathrm{NLHF}}^{\otimes k}\succcurlyeq y\big]\ \le\ \tfrac{1}{2}+\varepsilon.
\]
\end{proposition}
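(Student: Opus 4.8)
The plan is to construct an explicit instance where the NLHF policy collapses to a single deterministic response, making test-time scaling useless. The key intuition from the paper is that mode collapse is the enemy of U-alignment: if $\pi_{\mathrm{NLHF}}$ puts all its mass on one response $y^*$, then $\pi_{\mathrm{NLHF}}^{\otimes k}$ just outputs $k$ identical copies of $y^*$, and the multiset $\{y^*, \dots, y^*\}$ behaves exactly like the single response $y^*$ under any reasonable preference model. Hence the win rate against an adversarial $y$ cannot exceed what $y^*$ alone achieves, which is at most $\tfrac{1}{2}$ since $y^*$ is a Nash/maximal-lottery response.

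First I would design the population preference. I want a response set $\+Y$ containing a \emph{majority-preferred} response $y^*$ together with several minority responses, arranged so that (i) $y^*$ is the unique NLHF policy (i.e.\ the unique maximal lottery is the point mass on $y^*$), yet (ii) some minority response $y$ is favored by close to $\tfrac{1}{2}$ of the population over $y^*$. The simplest route is a Condorcet-winner construction: make $y^*$ beat every other single response with win rate strictly above $\tfrac{1}{2}$, which forces the maximal lottery to be $\delta_{y^*}$ by standard social-choice arguments. To get the $\tfrac12+\varepsilon$ bound tight, I would arrange a particular minority response $y_0$ so that a $(\tfrac12-\varepsilon)$-fraction of users strictly prefer $y_0$ to $y^*$ (and hence to any multiset of copies of $y^*$). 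A concrete way: take a population of rankings where most users rank $y^*$ first but a $(\tfrac12-\varepsilon)$-fraction rank $y_0$ strictly above $y^*$, with the remaining structure tuned so $y^*$ still Condorcet-dominates everything pairwise.

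The two substantive steps are then: \textbf{(1)} verifying that $\pi_{\mathrm{NLHF}}$ is \emph{unique} and deterministic on my instance, and \textbf{(2)} computing the win rate of the product policy. For step (2), because all $k$ samples from a deterministic $\pi_{\mathrm{NLHF}}$ equal $y^*$, the multiset $S = \{y^*,\dots,y^*\}$ has the same maximal element as $\{y^*\}$ under every ranking $\succ_{i,x}$; so $\boldsymbol{1}[S \succcurlyeq_{i,x} y] = \boldsymbol{1}[y^* \succcurlyeq_{i,x} y]$ pointwise. This collapses $\-P_{\+D}[\pi_{\mathrm{NLHF}}^{\otimes k}\succcurlyeq y]$ to $\-P_{\+D}[y^* \succcurlyeq y]$ \emph{independently of $k$}, and taking $y = y_0$ gives a value at most $\tfrac12+\varepsilon$ by construction. (In the PL formulation I would instead push nearly all the reward mass onto $y^*$ so that $\pi_{\mathrm{NLHF}}$ is arbitrarily close to $\delta_{y^*}$ and repeat the same collapse argument, absorbing the residual into $\varepsilon$.)

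The main obstacle I anticipate is step (1): showing the NLHF policy is \emph{exactly} (or within the $\varepsilon$-slack) the point mass on $y^*$, rather than a genuinely mixed maximal lottery. Uniqueness of the maximal lottery is delicate when the skew-symmetric preference matrix is degenerate, so I would choose parameters making the pairwise-advantage matrix strictly favor $y^*$ over every alternative (a strict Condorcet winner), which guarantees the symmetric zero-sum game has the pure value $\delta_{y^*}$ as its unique equilibrium. Balancing this strict-dominance requirement against the competing need to keep $y_0$'s win rate against $y^*$ as close to $\tfrac12$ as possible is the delicate part; the $\varepsilon$ slack in the statement is precisely the room I need to satisfy both constraints simultaneously, and I expect the construction to let $y_0$'s share approach $\tfrac12$ from below as a tunable parameter while preserving $y^*$'s strict Condorcet victory.
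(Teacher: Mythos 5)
Your proposal is correct and follows essentially the same route as the paper: a strict Condorcet winner forces the unique NLHF policy to be the point mass $\delta_{y^*}$, the product policy $\pi_{\mathrm{NLHF}}^{\otimes k}$ then consists of $k$ identical copies whose win rate collapses pointwise to that of $y^*$ alone, and a $(\tfrac12-\varepsilon)$-minority favoring some $y_0$ pins the bound at $\tfrac12+\varepsilon$. The paper simply instantiates your plan in its minimal form---$\+Y=\{y_1,y_2\}$ with two rankings weighted $\tfrac12+\varepsilon$ and $\tfrac12-\varepsilon$---which makes the uniqueness concern you flag in step (1) trivial, so the parameter-balancing difficulty you anticipate never arises.
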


We note that the limitation of NLHF stems from its lack of diversity. In particular, when there exists a response $y$ that is preferred by a strict majority of the population (that is, $y$ is a Condorcet winner), the NLHF policy collapses to outputting $y$ with probability $1$. This is optimal when $k=1$, since it guarantees a win rate of at least $\tfrac{1}{2}$ against any fixed alternative. However, always outputting the majority-preferred response ignores minority's preferences and therefore provides no benefit from test-time scaling.

\begin{proof}[Proof of \Cref{prop:limit of NLHF}]
Let $\+Y=\{y_1,y_2\}$ and fix any $\varepsilon\in(0,\tfrac12)$. Consider the population distribution $\+D$ over two rankings:
\[
\+D=
\begin{cases}
y_1 \succ y_2 & \text{with probability } \tfrac12+\varepsilon,\\
y_2 \succ y_1 & \text{with probability } \tfrac12-\varepsilon.
\end{cases}
\]
Then $\-P_{\+D}[y_1\succcurlyeq y_2]=\tfrac12+\varepsilon$.

It is clear that the unique Nash equilibrium policy under $\min_{\pi_1}\max_{\pi_2}\-P_{\+D}[\pi_1 \succcurlyeq \pi_2]$ is the deterministic policy $\pi_{\mathrm{NLHF}}$ that outputs $y_1$. Since $\pi_{\mathrm{NLHF}}$ is deterministic, test-time scaling does not change the output. Therefore, for any $k\ge 1$,
\[
\min_{y\in\+Y}\; \-P_{\+D}[\pi_{\mathrm{NLHF}}^{\otimes k}\succcurlyeq y]
=
\-P_{\+D}[y_1\succcurlyeq y_2]
=
\tfrac12+\varepsilon.
\]
This completes the proof.
\end{proof}


\subsection{Multi-Player Preference Optimization for U-Alignment}
Although NLHF does not exhibit the desired test-time scaling and, under single-sample evaluation, is limited by a $\tfrac{1}{2}$ barrier, there nonetheless exists a single-output policy $\pi$ whose test-time scaled policy $\pi^{\otimes k}$ achieves $(k,\nicefrac{k}{k+1})$-robust alignment. Moreover, we characterize such a policy as the symmetric Nash equilibrium of a simple $(k+1)$-player game.

We focus on the non-contextual setting; the contextual extension follows by applying the construction pointwise to each context. To this end, we introduce a \emph{multi-player alignment game}, which generalizes the two-player NLHF game. We then show that the symmetric Nash equilibrium of this game, which we call the \emph{Multi-Player Nash Equilibrium (MPNE)} policy, enjoys the desired test-time scaling property.

\begin{definition}[Multi-Player Alignment Game and Nash equilibrium]\label{dfn:multi-player alignment game}
Fix $k\ge 1$ and a population preference model $\-P_{\+D}$ over $\+Y$.
The \emph{$(k+1)$-player alignment game} has player set $[k+1]$, and each player $j\in[k+1]$ chooses an action $\pi_j\in\Delta(\+Y)$.
Given a strategy profile $(\pi_j,\pi_{-j})$, where $\pi_{-j}=\{\pi_\ell\}_{\ell\in[k+1]\setminus\{j\}}$, the utility of player $j$ is
\begin{align}\label{eq:multi-game utility}
u_j(\pi_j,\pi_{-j})
:= \-P_{\+D}\InBrackets{ \pi_j \succ \bigotimes_{\ell\ne j}\pi_\ell }.
\end{align}
A \emph{Nash equilibrium} is a strategy profile $(\pi^*_1,\ldots,\pi^*_{k+1})$ such that no player can improve her utility by deviating unilaterally. That is, for every $j\in[k+1]$ and every $\pi\in\Delta(\+Y)$,
\[
u_j(\pi^*_j,\pi^*_{-j}) \ \ge\ u_j(\pi,\pi^*_{-j}).
\]
\end{definition}

\begin{theorem}[Properties of MPNE policy of the Multi-Player Alignment Game]
\label{theorem:single-output good}
For any population preference $\-P_{\+D}$ satisfying \Cref{dfn:general preferences}, the $(k+1)$-player alignment game (\Cref{dfn:multi-player alignment game}) admits a symmetric Nash equilibrium in which every player uses the same policy $\pi^*$. Moreover, the test-time scaled policy $(\pi^*)^{\otimes k}$ achieves $(k,\nicefrac{k}{k+1})$-robust alignment:
\[
\min_{\pi}\; \-P_{\+D}\InBrackets{(\pi^*)^{\otimes k} \succcurlyeq \pi}
\ \ge\ 1-\frac{1}{k+1}.
\]
\end{theorem}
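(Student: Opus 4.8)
The plan is to establish the two claims separately: first the existence of a symmetric Nash equilibrium $\pi^*$, and then the robust-alignment guarantee for $(\pi^*)^{\otimes k}$. Most of the effort goes into existence; the guarantee is a short deduction once the equilibrium is in hand.

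For existence, the key structural observation is that each player's utility is \emph{linear} in her own strategy: fixing $\pi_{-j}$, we have $u_j(\pi_j,\pi_{-j}) = \sum_{y} \pi_j(y)\, c_y$ with coefficients $c_y := \-E_{S'\sim \otimes_{\ell\ne j}\pi_\ell}\InBrackets{\-P_{\+D}[y\succ S']}$ that do not depend on $\pi_j$. Moreover, since the opponents enter only through the unordered product $\otimes_{\ell\ne j}\pi_\ell$, the game is fully symmetric under permutations of players. I would therefore define the symmetric best-response correspondence (a set-valued map) $BR:\Delta(\+Y)\rightrightarrows\Delta(\+Y)$,
\[
BR(\pi) := \argmax_{\pi'\in\Delta(\+Y)} \-P_{\+D}\big[\pi' \succ \pi^{\otimes k}\big],
\]
which records the best response of a single deviator when all $k$ remaining players use $\pi$. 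Since $\Delta(\+Y)$ is compact and convex, since $\pi\mapsto\pi^{\otimes k}$ is a continuous (polynomial) map into distributions over size-$k$ multisets, and since $\-P_{\+D}[\cdot\succ\cdot]$ is continuous, Berge's maximum theorem shows that $BR$ is upper hemicontinuous with nonempty, compact, convex values (convexity because we maximize a linear functional over a simplex). Kakutani's fixed-point theorem then yields $\pi^*\in BR(\pi^*)$. Unwinding the definition, at the profile in which every player plays $\pi^*$, each player's opponents jointly produce $(\pi^*)^{\otimes k}$, so each player's best-response set is exactly $BR(\pi^*)\ni\pi^*$; hence no unilateral deviation helps and $(\pi^*,\dots,\pi^*)$ is a symmetric Nash equilibrium.

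For the guarantee, write $v^* := \-P_{\+D}[\pi^*\succ(\pi^*)^{\otimes k}]$ for the equilibrium payoff. The Nash condition says precisely that $v^* = \max_{\pi'\in\Delta(\+Y)}\-P_{\+D}[\pi'\succ(\pi^*)^{\otimes k}]$. On the other hand, $v^*$ is itself a self-play quantity, so applying the \emph{Multi v.s.\ Single Copy} property (\Cref{dfn:general preferences}, item 2) together with antisymmetry to $\pi^*$ gives $v^* = 1 - \-P_{\+D}[(\pi^*)^{\otimes k}\succcurlyeq \pi^*] \le \tfrac{1}{k+1}$. Combining the two statements, $\max_{\pi'}\-P_{\+D}[\pi'\succ(\pi^*)^{\otimes k}]\le \tfrac{1}{k+1}$, and a final application of antisymmetry flips this into the desired lower bound:
\[
\min_{\pi}\-P_{\+D}\big[(\pi^*)^{\otimes k}\succcurlyeq\pi\big]
= 1 - \max_{\pi}\-P_{\+D}\big[\pi\succ(\pi^*)^{\otimes k}\big]
\ge 1 - \tfrac{1}{k+1}.
\]

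I expect the main obstacle to be the existence argument rather than the guarantee. The delicate points there are (i) verifying that controlling a single deviator's best response against the symmetric profile suffices to certify a symmetric equilibrium of the genuinely $(k+1)$-player game, which relies on the permutation symmetry of the product policy; and (ii) checking the continuity hypotheses of Berge's theorem and Kakutani's theorem, in particular that $\pi\mapsto\pi^{\otimes k}$ and the induced preference value vary continuously. By contrast, the alignment bound is essentially immediate once one recognizes that the equilibrium value coincides with the self-play quantity already controlled by \Cref{dfn:general preferences}; this identification is the conceptual crux linking the game-theoretic fixed point to the test-time scaling guarantee.
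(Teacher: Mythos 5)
Your proof is correct and takes essentially the same route as the paper: existence of a symmetric Nash equilibrium for the symmetric $(k+1)$-player game, followed by the identical chain of deductions for the guarantee (the Nash deviation condition bounds $\max_{\pi}\-P_{\+D}[\pi\succ(\pi^*)^{\otimes k}]$ by the self-play value, which the Multi-vs-Single-Copy property plus antisymmetry caps at $\tfrac{1}{k+1}$, and one more application of antisymmetry yields the $1-\tfrac{1}{k+1}$ bound). The only difference is cosmetic: where the paper invokes standard existence results for symmetric games, you correctly unpack that citation via Berge's maximum theorem and Kakutani's fixed-point theorem applied to the single-deviator best-response correspondence.
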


\begin{proof}
The $(k+1)$-player alignment game is symmetric, and each player's strategy space $\Delta(\+Y)$ is compact and convex, with utilities continuous and linear in each argument. Hence, by standard existence results for symmetric games, the game admits a symmetric Nash equilibrium in which every player uses the same policy $\pi^*$.

By the definition of Nash equilibrium, no player can improve her utility by deviating unilaterally. In particular, for any player $j$,
\[
\max_{\pi} u_j(\pi,(\pi^*)^{\otimes k})
\ \le\
u_j(\pi^*,(\pi^*)^{\otimes k}).
\]
By \Cref{dfn:general preferences}, the win rate of $\pi^*$ against $k$ independent copies of itself is at most $\frac{1}{k+1}$, so
\[
u_j(\pi^*,(\pi^*)^{\otimes k}) \le \frac{1}{k+1}.
\]
Therefore,
\begin{align*}
\min_{\pi}\; \-P_{\+D}\InBrackets{(\pi^*)^{\otimes k} \succcurlyeq \pi}
&= 1 - \max_{\pi}\; \-P_{\+D}\InBrackets{\pi \succ (\pi^*)^{\otimes k}} \\
&\ge 1 - u_j(\pi^*,(\pi^*)^{\otimes k}) \\
&\ge 1 - \frac{1}{k+1}.
\end{align*}
This proves that $(\pi^*)^{\otimes k}$ achieves the claimed robust alignment guarantee.
\end{proof}

We are now ready to state the paper's main theorem.

\begin{restatable}{theorem}{MainTheorem}\label{theorem:main}
For any population preference $\-P_{\+D}$ satisfying \Cref{dfn:general preferences}, there exists a family of single-output policies $\{\pi_k\}_{k\in\mathbb{N}_{>0}}$ such that their test-time scaled policies $\{\pi_k^{\otimes k}\}_{k\in\mathbb{N}_{>0}}$ achieve \ualign at the optimal rate $\nicefrac{k}{k+1}$. In particular, for every $k\ge 1$, there exists a single-output policy $\pi_k:\+X\to\Delta(\+Y)$ satisfying
\[
\min_{\pi'}\; \-P_{\+D}\!\InBrackets{\pi_k^{\otimes k} \succcurlyeq \pi'} \ \ge\ \nicefrac{k}{k+1}.
\]
\end{restatable}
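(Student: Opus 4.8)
The plan is to derive the theorem as the contextual lift of \Cref{theorem:single-output good}, supplemented by the matching lower bounds of \Cref{prop:lower bound PL} and \Cref{prop:lower bound rankings} to establish optimality of the rate. The achievability half requires essentially no new ideas beyond what \Cref{theorem:single-output good} already delivers in the non-contextual case; the real content has been front-loaded into that theorem.

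First I would fix $k \ge 1$ and handle each prompt independently. By the remark following \Cref{prop:preference}, for every fixed $x \in \+X$ the restricted preference $\-P_{\+D}[\,\cdot \mid x]$ is itself a population preference over $\+Y$ satisfying \Cref{dfn:general preferences}. I would then invoke \Cref{theorem:single-output good} pointwise: for each $x$ there is a symmetric Nash equilibrium policy $\pi^*_x \in \Delta(\+Y)$ of the $(k+1)$-player alignment game associated with $\-P_{\+D}[\,\cdot\mid x]$, and its product satisfies $\min_{\pi' \in \Delta(\+Y)} \-P_{\+D}[(\pi^*_x)^{\otimes k} \succcurlyeq \pi' \mid x] \ge \frac{k}{k+1}$. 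Assembling the single-output policy $\pi_k$ by setting $\pi_k(\cdot \mid x) := \pi^*_x$ for every $x$, and noting that the robust-alignment condition is a per-prompt guarantee, the pointwise bounds immediately give $\min_{\pi'} \-P_{\+D}[\pi_k^{\otimes k} \succcurlyeq \pi'] \ge \frac{k}{k+1}$ over all contextual competitors $\pi': \+X \to \Delta(\+Y)$. As $\frac{k}{k+1} \to 1$, the family $\{\pi_k\}$ achieves \ualign with rate $f(k) = \frac{k}{k+1}$.

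For optimality I would appeal to the two lower bounds already established. \Cref{prop:lower bound PL} exhibits, for each $k$, a Plackett--Luce instance on which no $k$-output policy can beat $\frac{k}{k+1}$ against some single response, while \Cref{prop:lower bound rankings} gives an analogous ranking-based instance with bound $\frac{k}{k+1}(1 + \frac{1}{|\+Y|})$ tending to $\frac{k}{k+1}$. Since any test-time scaled single-output policy $\pi^{\otimes k}$ is a particular $k$-output policy, these bounds apply a fortiori, showing that no family of single- or multi-output policies can achieve a rate $f(k) > \frac{k}{k+1}$ uniformly. This matches the achievable rate and pins it down as optimal.

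The main obstacle is therefore not in this final assembly, which is routine, but in the structural content of \Cref{theorem:single-output good}: the existence of a symmetric equilibrium for the multi-player game and the bound relating a player's best response against $k$ copies of $\pi^*$ to the Multi-vs-Single-Copy property of \Cref{dfn:general preferences}. The one point I would verify carefully in the lift is that the minimum in the robust-alignment definition, taken over all randomized contextual competitors $\pi'$, is attained prompt-wise, so that the per-context guarantees suffice; this follows from linearity of $\-P_{\+D}[\,\cdot \succcurlyeq \cdot \mid x]$ in its second argument, which lets the objective separate across prompts and lets us restrict each competitor to a fixed response per prompt without loss.
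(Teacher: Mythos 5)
Your proposal is correct and takes essentially the same route as the paper: existence comes from the symmetric Nash equilibrium of \Cref{theorem:single-output good}, applied pointwise per prompt (exactly the contextual lift the paper mentions when it says the extension ``follows by applying the construction pointwise to each context''), and optimality comes from \Cref{prop:lower bound PL,prop:lower bound rankings}. Your extra verification that the minimum over contextual competitors separates across prompts is a detail the paper leaves implicit, and you justify it correctly via linearity of the preference in its second argument.
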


\begin{proof}
Optimality follows from \Cref{prop:lower bound PL,prop:lower bound rankings}, which show that the rate $\nicefrac{k}{k+1}$ cannot be improved, even when allowing multi-output policies. The existence of the stated family of single-output policies $\{\pi_k\}_{k\in\mathbb{N}_{>0}}$ follows directly from \Cref{theorem:single-output good}.
\end{proof}

A few remarks are in order. Compared with existing alignment frameworks such as RLHF and NLHF, the new alignment framework formulates the LLM alignment problem as a multi-player alignment game (\Cref{dfn:multi-player alignment game}) and aims to find the multi-player Nash equilibrium (MPNE) policy. We briefly discuss the advantages of MPNE over RLHF and NLHF.

\begin{itemize}
    \item[1.] A simple test-time sampling MPNE policy achieves an optimal $(k, \nicefrac{k}{k+1})$-robust alignment guarantee. In contrast, test-time scaling of NLHF only yields a $(k,\nicefrac{1}{2}+\varepsilon)$-robust alignment guarantee. 
    \item[2.] As a consequence of its strong test-time scaling property, the MPNE policy can generate diverse responses that accommodate both the preferences of the majority and the minority. In contrast, the NLHF/RLHF policy may collapse into a near-deterministic policy that favors the majority.
\end{itemize}

\subsection{Theoretical Guarantees on Convergence of Self-Play}

We study the convergence properties of self-play no-regret learning dynamics in multi-player alignment games (\Cref{dfn:multi-player alignment game}) to the desirable multi-player Nash equilibrium (MPNE) policy. When $k=1$, the resulting two-player alignment game is a constant-sum game and coincides with Nash learning from human feedback (NLHF). In this case, a Nash equilibrium policy can be learned efficiently via self-play using no-regret algorithms~\citep{munos2024nash, swamy2024minimaximalist}, and algorithms with last-iterate convergence guarantees are known~\citep{liu2024comal, wang2025magnetic}. Moreover, these no-regret methods are practical to implement and have been successfully applied to large-scale LLM alignment, achieving strong empirical performance~\citep{wu2025selfplay, zhang2025iterative, liu2024comal}.

When $k>1$, the $(k+1)$-player alignment game becomes substantially more challenging to solve. In this regime, there are currently no general theoretical guarantees for the convergence of no-regret self-play dynamics to Nash equilibria. In the remainder of this section, we establish two theoretical guarantees for self-play learning dynamics in multi-player alignment games.

Our first guarantee shows that any symmetric Nash equilibrium policy is a fixed point of gradient-based learning dynamics. Since the game is symmetric, all players apply the same update rule. Accordingly, we write $\pi^t$ for the common policy at iteration $t\ge 1$. This result implies that if the gradient-ascent dynamic converges, then its limit point must be a symmetric Nash equilibrium of the alignment game.

\begin{proposition}[Fixed Point of Projected Gradient Ascent]\label{proposition:fixed point}
Let $u_1(\pi_1,\pi_{-1})$ be the utility of player $1$ in the $(k+1)$-player alignment game.
For any step size $\eta > 0$, define the projected gradient-ascent operator $F_{\eta}:\Delta(\+Y)\to\Delta(\+Y)$ by
\[
F(\pi)\;:=\;\Pi_{\Delta(\+Y)}\!\Big(\pi+\eta\cdot\nabla_{\pi_1} u_1(\pi,\pi^{\otimes k})\Big),
\]
where $\Pi_{\Delta(\+Y)}$ denotes Euclidean projection onto the simplex.
Then $F_{\eta}$ has at least one fixed point. Moreover, any fixed point $\pi$ of $F_{\eta}$ induces a symmetric Nash equilibrium $(\pi,\ldots,\pi)$ of the multi-player alignment game.
\end{proposition}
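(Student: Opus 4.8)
The plan is to establish the two claims separately: first that $F_\eta$ admits a fixed point, and then that every fixed point induces a symmetric Nash equilibrium. The key structural fact I would exploit throughout is that the utility $u_1(\pi_1,\pi_{-1}) = \-P_{\+D}\InBrackets{\pi_1 \succ \bigotimes_{\ell\ne 1}\pi_\ell}$ is \emph{linear} in player $1$'s own strategy: by the definition of the population preference, $\-P_{\+D}[\pi_1 \succ \pi'] = \sum_{y\in\+Y}\pi_1(y)\,\-P_{\+D}[y \succ \pi']$ for any fixed opponent profile $\pi'$. Hence the gradient $g(\pi):=\nabla_{\pi_1}u_1(\pi,\pi^{\otimes k})$ does not depend on the value plugged into the first slot and equals the vector with entries $g(\pi)_y = \-P_{\+D}[y\succ \pi^{\otimes k}]$.

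For existence, I would first check that $g$ is continuous on $\Delta(\+Y)$: each entry $\-P_{\+D}[y\succ\pi^{\otimes k}]$ is a polynomial of degree $k$ in the coordinates of $\pi$, since the probabilities of the product distribution $\pi^{\otimes k}$ are degree-$k$ monomials in $\pi$ and the preference is a finite expectation of these. Because Euclidean projection onto a closed convex set is $1$-Lipschitz, $F_\eta(\pi)=\Pi_{\Delta(\+Y)}(\pi+\eta\,g(\pi))$ is then a continuous self-map of the compact convex set $\Delta(\+Y)$. Brouwer's fixed point theorem immediately yields a fixed point.

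For the converse direction, I would use the obtuse-angle (variational-inequality) characterization of Euclidean projection: $\pi=\Pi_{\Delta(\+Y)}(\pi+\eta\,g(\pi))$ holds if and only if $\langle (\pi+\eta\,g(\pi))-\pi,\,z-\pi\rangle\le 0$ for all $z\in\Delta(\+Y)$, i.e.\ (dividing by $\eta>0$) if and only if $\langle g(\pi),\,z-\pi\rangle\le 0$ for every $z\in\Delta(\+Y)$. Invoking linearity of $u_1$ in its first argument, $\langle g(\pi),\,z-\pi\rangle = u_1(z,\pi^{\otimes k}) - u_1(\pi,\pi^{\otimes k})$, so the fixed-point condition is exactly $u_1(z,\pi^{\otimes k})\le u_1(\pi,\pi^{\otimes k})$ for all $z\in\Delta(\+Y)$; that is, $\pi$ is a best response for player $1$ when all other players play $\pi$. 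Since the game in \Cref{dfn:multi-player alignment game} is symmetric, this best-response condition holds verbatim for every player $j$ at the profile $(\pi,\dots,\pi)$, so no player can profitably deviate and $(\pi,\dots,\pi)$ is a symmetric Nash equilibrium.

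I expect the main obstacle to be conceptual rather than computational, and it lies in the converse direction: it is precisely the linearity of each player's utility in her own strategy that makes the first-order stationarity encoded by the projected-gradient fixed point equivalent to the \emph{global} best-response condition, with no concavity or duality gap. For a general nonlinear utility a projected-gradient fixed point would only certify stationarity and need not be a Nash equilibrium, so this step is where the special multilinear structure of the alignment game is essential.
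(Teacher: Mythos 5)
Your proposal is correct and follows essentially the same route as the paper's proof: Brouwer's fixed-point theorem on the continuous self-map $F_\eta$ for existence, then the first-order optimality (variational-inequality) characterization of Euclidean projection combined with linearity of $u_1$ in its first argument to turn the fixed-point condition into the global best-response condition, extended to all players by symmetry. If anything, you supply details the paper leaves implicit---the explicit form $g(\pi)_y=\-P_{\+D}[y\succ\pi^{\otimes k}]$ and the degree-$k$ polynomial argument for continuity of the gradient---so no gaps remain.
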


\begin{proof}
It is not hard to see that the map
$\pi\mapsto \nabla_{\pi_1}u_1(\pi,\pi^{\otimes k})$ is continuous. The projection operator
$\Pi_{\Delta(\+Y)}$ is continuous, and $\Delta(\+Y)$ is compact and convex. Therefore, $F_\eta$
is a continuous map from $\Delta(\+Y)$ to itself. By Brouwer's fixed-point theorem, $F_\eta$ has
at least one fixed point.

Now let $\pi$ be any fixed point of $F_\eta$, and write
\[
g^\pi := \nabla_{\pi_1} u_1(\pi,\pi^{\otimes k}).
\]
The fixed-point condition $\pi = \Pi_{\Delta(\+Y)}(\pi+\eta \cdot g^\pi)$ implies, by the first-order
optimality condition for Euclidean projection onto a convex set, that
\[
\langle g^\pi,\ \pi'-\pi\rangle \le 0,\qquad \forall \pi'\in\Delta(\+Y).
\]
Equivalently,
\[
\langle g^\pi,\ \pi'\rangle \le \langle g^\pi,\ \pi\rangle,\qquad \forall \pi'\in\Delta(\+Y).
\]
Since $u_1(\cdot,\pi^{\otimes k})$ is linear in its first argument, the gradient $g^\pi$
represents the linear functional defining player~$1$'s payoff against $\pi^{\otimes k}$, and
the inequality above is exactly the best-response condition:
\[
u_1(\pi',\pi^{\otimes k}) \le u_1(\pi,\pi^{\otimes k}),\qquad \forall \pi'\in\Delta(\+Y).
\]
By symmetry, the same holds for every player. Hence $(\pi,\ldots,\pi)$ is a symmetric Nash
equilibrium of the multi-player alignment game.
\end{proof}

Our second result provides finite-time guarantees for no-regret self-play learning dynamics, at the cost of storing a sequence of past policies. Let each player employ the same online learning algorithm. Since the alignment game is symmetric, each player produces the same sequence of strategies $\{\pi^t\}_{t \in [T]}$. Recall that each player's utility function is $u(\cdot, (\pi^t)^{\otimes k}) = \-P_{\+D}(\cdot \succ \sigma)$, which is linear. The regret of each player is 
\begin{align*}
    \reg^T := \max_{\pi \in \Delta(\+Y)} \sum_{t=1}^T u(\pi, (\pi^t)^{\otimes k}) - \sum_{t=1}^T u(\pi^t, (\pi^t)^{\otimes k}).
\end{align*}

Consider any no-regret learning algorithm with regret $\reg^T$. We claim that the policy which samples $t\sim \Unif([T])$ and then executes $(\pi^t)^{\otimes k}$ achieves $(k,\nicefrac{k}{k+1}-\nicefrac{\reg^T}{T})$-robust alignment.

\begin{proposition}[No-Regret Self-Play Dynamics]\label{prop:self-play}
Suppose the players self-play using a no-regret learning algorithm and generate iterates
$\{\pi^t\}_{t\in[T]}$ with regret $\reg^T$. Let $\sigma^T$ denote the policy that samples
$t\sim\Unif([T])$ and then executes $(\pi^t)^{\otimes k}$. Then $\sigma^T$ achieves
$(k,\nicefrac{k}{k+1}-\nicefrac{\reg^T}{T})$-robust alignment.
\end{proposition}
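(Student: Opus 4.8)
The plan is to reduce the robust-alignment guarantee of the mixture policy $\sigma^T$ directly to the regret bound, exploiting the symmetry of the self-play dynamics together with the Multi-vs-Single-Copy property of \Cref{dfn:general preferences}, rather than appealing to any convergence-to-equilibrium argument.

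First I would unfold the definition of $\sigma^T$. Since $\sigma^T$ samples $t\sim\Unif([T])$ and then plays $(\pi^t)^{\otimes k}$, for any competing policy $\pi'$ we have
\[
\-P_{\+D}\InBrackets{\sigma^T \succcurlyeq \pi'} = \frac{1}{T}\sum_{t=1}^T \-P_{\+D}\InBrackets{(\pi^t)^{\otimes k} \succcurlyeq \pi'} = 1 - \frac{1}{T}\sum_{t=1}^T u(\pi', (\pi^t)^{\otimes k}),
\]
where the last equality uses antisymmetry (Property~1) to rewrite each term as $\-P_{\+D}[(\pi^t)^{\otimes k} \succcurlyeq \pi'] = 1 - u(\pi', (\pi^t)^{\otimes k})$. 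Thus it suffices to upper-bound the average deviation utility $\frac{1}{T}\sum_t u(\pi', (\pi^t)^{\otimes k})$ by $\frac{1}{k+1} + \frac{\reg^T}{T}$ for every fixed $\pi'$.

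Next I would invoke the regret guarantee. By the definition of $\reg^T$, the best fixed action satisfies $\sum_t u(\pi', (\pi^t)^{\otimes k}) \le \sum_t u(\pi^t, (\pi^t)^{\otimes k}) + \reg^T$, so after dividing by $T$ it remains only to bound the on-path average $\frac{1}{T}\sum_t u(\pi^t, (\pi^t)^{\otimes k})$. This is where the key structural step enters: each on-path term $u(\pi^t, (\pi^t)^{\otimes k}) = \-P_{\+D}[\pi^t \succ (\pi^t)^{\otimes k}]$ is the win rate of a single copy of $\pi^t$ against $k$ independent copies of itself, and applying the Multi-vs-Single-Copy property (Property~2) together with antisymmetry to the self-match gives $u(\pi^t,(\pi^t)^{\otimes k}) \le \frac{1}{k+1}$ for every $t$, uniformly. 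Chaining the three displays yields $\-P_{\+D}[\sigma^T \succcurlyeq \pi'] \ge \frac{k}{k+1} - \frac{\reg^T}{T}$, and since $\pi'$ was arbitrary this is exactly the claimed $(k, \nicefrac{k}{k+1}-\nicefrac{\reg^T}{T})$-robust alignment.

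The main conceptual point --- and the reason the argument is short rather than a genuine multiplayer-equilibrium analysis --- is the following. The $(k+1)$-player alignment game is symmetric but \emph{not} zero-sum, so no-regret self-play is in general only guaranteed to converge to a coarse correlated equilibrium, not to the symmetric Nash equilibrium that \Cref{theorem:single-output good} relies upon; one therefore cannot simply pass to the average iterate and cite the Nash guarantee. The obstacle is sidestepped by noting that symmetry forces the round-$t$ opponent faced by each player to be precisely $(\pi^t)^{\otimes k}$, and the Multi-vs-Single-Copy property bounds the payoff of any policy against its own $k$-fold product by $\frac{1}{k+1}$ \emph{regardless of whether $\pi^t$ is an equilibrium}. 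This uniform structural bound plays the role that the value of the game plays in the two-player zero-sum folk theorem, and is what makes the regret reduction go through without any equilibrium-convergence hypothesis.
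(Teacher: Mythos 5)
Your proof is correct and is essentially the paper's own argument: the regret bound reduces everything to the on-path average $\frac{1}{T}\sum_t u(\pi^t,(\pi^t)^{\otimes k})$, each term of which is bounded by $\frac{1}{k+1}$ via the Multi-vs-Single-Copy property of \Cref{dfn:general preferences}, and antisymmetry converts the deviation-utility bound into the win-rate guarantee. The only difference is cosmetic --- you apply antisymmetry at the start rather than at the end --- and your closing remark about why no equilibrium-convergence hypothesis is needed matches the role this proposition plays in the paper.
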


\begin{proof}
By the definition of regret, for any $\pi\in\Delta(\+Y)$,
\[
\frac{1}{T}\sum_{t=1}^T u(\pi,(\pi^t)^{\otimes k})
\le
\frac{1}{T}\sum_{t=1}^T u(\pi^t,(\pi^t)^{\otimes k})
+ \frac{\reg^T}{T}.
\]
Since $\sigma^T$ uniformly samples from $\{(\pi^t)^{\otimes k}\}_{t\in[T]}$, the left-hand
side equals $u(\pi,\sigma^T)$. Moreover, by \Cref{dfn:general preferences}, for each $t$ we have
$u(\pi^t,(\pi^t)^{\otimes k}) \le \frac{1}{k+1}$, and hence the first term on the right-hand
side is also at most $\frac{1}{k+1}$. Therefore,
\[
\max_{\pi\in\Delta(\+Y)} u(\pi,\sigma^T)
\le
\frac{1}{k+1} + \frac{\reg^T}{T}.
\]

Applying \Cref{dfn:general preferences} once more, we obtain
\[
\min_{\pi\in\Delta(\+Y)} \-P_{\+D}[\sigma^T \succcurlyeq \pi]
\ge
\frac{k}{k+1} - \frac{\reg^T}{T},
\]
which proves the claim.
\end{proof}

We note that many no-regret algorithms, such as online gradient ascent and multiplicative weights update, achieve $\reg^T = O(\sqrt{T})$~\citep{hazan2016introduction,orabona_modern_2023}. Consequently, taking $T = O(\nicefrac{1}{\varepsilon^2})$ iterations suffices to obtain a policy that achieves $(k,f(k))$-robust alignment with $f(k)\ \ge\ \nicefrac{k}{k+1}-\varepsilon$.

\subsection{Extensions to Multi-Output Opponents}
We extend the definitions of robust alignment and \ualign to allow the opponent policy to generate $\ell\ge 1$ responses. This strictly generalizes the standard case $\ell=1$.

\begin{definition}[$(k,\ell,f_\ell(k))$-Robust Alignment]
Let $k,\ell \in \mathbb{N}_{>0}$ denote the numbers of responses generated at test time by the model and the opponent, respectively. Let $f_\ell:\mathbb{N}_{>0}\to[0,1]$ be a function.
A policy $\pi:\+X\to\Delta(\+Y^k)$ is said to achieve $(k,\ell,f_\ell(k))$-robust alignment if, for every prompt $x\in\+X$, its win rate against any opponent policy $\pi':\+X\to\Delta(\+Y^\ell)$ satisfies
\[
\min_{\pi'}\; \Pr\!\big[\pi \succcurlyeq \pi' \mid x\big]\ \ge\ f_\ell(k).
\]
\end{definition}

We note that $(k,f(k))$-robust alignment is the special case $\ell=1$ of the above definition, that is, $(k,1,f_1(k))$-robust alignment. Equipped with this extension, we now introduce the notion of \emph{$\ell$-\ualign}, which reduces to \ualign when $\ell=1$.

\begin{definition}[$\ell$-Asymptotic \ualign]
Fix $\ell \in \mathbb{N}_{>0}$ as the number of responses generated at test time by the opponent model. Let $f_\ell:\mathbb{N}_{>0}\to[0,1]$ be a rate function satisfying $\lim_{k\to\infty} f_\ell(k)=1$.
We say that a family of policies $\{\pi_k\}_{k\in\mathbb{N}_{>0}}$ achieves \emph{$\ell$-\ualign with rate $f_\ell$} if, for every $k\in\mathbb{N}_{>0}$, the policy $\pi_k$ achieves $(k,\ell,f_\ell(k))$-robust alignment.
\end{definition}

In \Cref{theorem:main}, we show that when $\ell=1$ we can achieve $1$-\ualign with rate $f_1(k)=\nicefrac{k}{k+1}$. A natural question is: what is the optimal rate for $\ell$-\ualign? On the lower bound side, we can adapt the proof of \Cref{prop:lower bound PL} to obtain a lower bound of $\nicefrac{k}{k+\ell}$ (for all $k\ge 1$) under the Plackett--Luce model. On the upper bound side, combining \Cref{theorem:main} with a union bound shows that the Nash equilibrium policy of the multi-player alignment game (\Cref{dfn:multi-player alignment game}) already achieves $\ell$-\ualign with rate $\nicefrac{k+1-\ell}{k+1}$. In particular, if the opponent outputs $\ell$ responses and we seek a $\tfrac{1}{2}$ win rate guarantee, then it suffices for the model to output $2\ell$ responses.  The upper and lower bounds coincide for $\ell=1$, but a gap remains for $\ell>1$. Closing this gap is an interesting direction for future work.

\begin{theorem}\label{thm:k-l-main}
There exists a family of single-output policies $\{\pi_k\}_{k \in \mathbb{N}_{>0}}$ such that the corresponding test-time scaled policies $\{\pi_k^{\otimes k}\}_{k \in \mathbb{N}_{>0}}$ achieve $\ell$-\ualign with rate $\nicefrac{k+1-\ell}{k+1}$ for any $\ell\in \mathbb{N}_{>0}$. In particular, $\pi_k$ can be chosen as a symmetric Nash equilibrium of the $(k+1)$-player alignment game (\Cref{dfn:multi-player alignment game}).
\end{theorem}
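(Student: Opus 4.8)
The plan is to leverage \Cref{theorem:main} (equivalently \Cref{theorem:single-output good}) together with the subadditivity property from \Cref{dfn:general preferences}, reducing the $\ell$-opponent case to the already-understood $\ell=1$ case via a union bound. Let $\pi^*$ denote a symmetric Nash equilibrium policy of the $(k+1)$-player alignment game, and set $\pi_k=\pi^*$. We already know from \Cref{theorem:single-output good} that $(\pi^*)^{\otimes k}$ wins against any \emph{single}-output opponent with probability at least $\nicefrac{k}{k+1}$; the task is to upgrade this to an $\ell$-output opponent while losing only an additive $\nicefrac{\ell-1}{k+1}$ in the rate.

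The key step is to bound the probability that an $\ell$-output opponent $\pi'$ beats $(\pi^*)^{\otimes k}$. Fix any prompt $x$ and decompose the opponent's $\ell$-multiset $S'$ as the multiset union of $\ell$ singletons, $S'=\{y_1'\}+\dots+\{y_\ell'\}$. By the subadditivity property (Property~3 of \Cref{dfn:general preferences}), for any realization we have
\[
\mathbf{1}\InBrackets{S'\succ_{i,x} S}\ \le\ \sum_{j=1}^{\ell}\mathbf{1}\InBrackets{\{y_j'\}\succ_{i,x} S},
\]
where $S\sim(\pi^*)^{\otimes k}(\cdot\mid x)$. Taking expectations over the population and the draws of $S,S'$ and using linearity gives
\[
\-P_{\+D}\InBrackets{\pi' \succ (\pi^*)^{\otimes k}\mid x}\ \le\ \sum_{j=1}^{\ell}\-P_{\+D}\InBrackets{\pi_j' \succ (\pi^*)^{\otimes k}\mid x},
\]
where $\pi_j'$ is the $j$-th marginal single-output policy induced by $\pi'$. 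Each term on the right is a win probability of a \emph{single}-output policy against $(\pi^*)^{\otimes k}$, which by \Cref{theorem:single-output good} is at most $\nicefrac{1}{k+1}$. Hence the left-hand side is at most $\nicefrac{\ell}{k+1}$, and by antisymmetry $\-P_{\+D}\InBrackets{(\pi^*)^{\otimes k}\succcurlyeq \pi'\mid x}\ge 1-\nicefrac{\ell}{k+1}=\nicefrac{k+1-\ell}{k+1}$. Since this holds for every prompt $x$ and every $\ell$-output opponent $\pi'$, taking the minimum over $\pi'$ establishes $(k,\ell,\nicefrac{k+1-\ell}{k+1})$-robust alignment, and the family $\{\pi_k\}$ therefore achieves $\ell$-\ualign at the claimed rate.

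I anticipate the main subtlety to be the correct handling of the decomposition step: one must verify that subadditivity, which is stated in \Cref{dfn:general preferences} for multiset unions on the \emph{left} side of $\succ$, applies cleanly after conditioning on the draw $S$ and only then taking expectations, so that the bound genuinely reduces to $\ell$ independent single-output comparisons against the same $(\pi^*)^{\otimes k}$. The marginals $\pi_j'$ need not be independent, but this is irrelevant since the union bound is applied pointwise before any expectation and each marginal comparison is separately controlled by \Cref{theorem:single-output good}. The rate $\nicefrac{k+1-\ell}{k+1}$ tends to $1$ as $k\to\infty$ for any fixed $\ell$, confirming that this indeed yields $\ell$-\ualign.
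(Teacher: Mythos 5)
Your proposal is correct and takes essentially the same route as the paper's own proof: choose $\pi_k$ to be a symmetric Nash equilibrium $\pi^*$ of the $(k+1)$-player alignment game, use subadditivity to union-bound the $\ell$-output opponent by $\ell$ single-response comparisons, each at most $\nicefrac{1}{k+1}$ by \Cref{theorem:single-output good}, and finish with antisymmetry. The only cosmetic difference is that you phrase subadditivity as a pointwise indicator inequality (which is literal only in the rankings model), whereas Property~3 of \Cref{dfn:general preferences} is stated at the level of $\-P_{\+D}$ for fixed multisets---iterating it there and then taking expectations over $S\sim(\pi^*)^{\otimes k}$ and $S'\sim\pi'(\cdot\mid x)$, as the paper does, makes the argument cover mixtures of Plackett--Luce preferences as well.
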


\begin{proof}
Fix any $k\ge 1$ and let $\pi^*$ be a symmetric Nash equilibrium policy of the $(k+1)$-player alignment game.
By \Cref{theorem:single-output good},
\[
\max_{y\in\+Y}\; \-P_{\+D}\InBrackets{y \succ (\pi^*)^{\otimes k}} \le \frac{1}{k+1}.
\]
By the subadditivity property in \Cref{dfn:general preferences}, for any $S=(y^{(1)},\ldots,y^{(\ell)})\in\+Y^\ell$,
\[
\-P_{\+D}\InBrackets{S \succ (\pi^*)^{\otimes k}}
\le \sum_{i=1}^\ell \-P_{\+D}\InBrackets{y^{(i)} \succ (\pi^*)^{\otimes k}}
\le \frac{\ell}{k+1},
\]
and hence
\[
\max_{S\in\+Y^\ell}\; \-P_{\+D}\InBrackets{S \succ (\pi^*)^{\otimes k}} \le \frac{\ell}{k+1}.
\]
Using antisymmetry from \Cref{dfn:general preferences}, we obtain
\begin{align*}
\min_{\pi\in\Delta(\+Y^\ell)} \-P_{\+D}\InBrackets{(\pi^*)^{\otimes k}\succcurlyeq \pi }
&= 1 - \max_{\pi\in\Delta(\+Y^\ell)} \-P_{\+D}\InBrackets{\pi \succ (\pi^*)^{\otimes k}} \\
&= 1 - \max_{S\in\+Y^\ell} \-P_{\+D}\InBrackets{S \succ (\pi^*)^{\otimes k}} \\
&\ge 1 - \frac{\ell}{k+1}
= \frac{k+1-\ell}{k+1}.
\end{align*}
This completes the proof.
\end{proof}
\bibliographystyle{plainnat}
\bibliography{ref, references}
\appendix

\end{document}